\title{\textbf{Sample Complexity of Nonparametric Off-Policy Evaluation on Low-Dimensional Manifolds \\ using Deep Networks}}
\author{Xiang Ji \hspace{0.5in} Minshuo Chen \hspace{0.5in} Mengdi Wang \hspace{0.5in} Tuo Zhao
\footnote{Xiang Ji, Minshuo Chen and Mengdi Wang are affiliated with Princeton University; Tuo Zhao is affiliated with Georgia Tech. Email: \texttt{xiangj@princeton.edu, tourzhao@gatech.edu}.}
}
\date{}
\begin{document}

\maketitle

\begin{abstract}
We consider the off-policy evaluation problem of reinforcement learning using deep convolutional neural networks. We analyze the deep fitted Q-evaluation method for estimating the expected cumulative reward of a target policy, when the data are generated from an unknown behavior policy. We show that, by choosing network size appropriately, one can leverage any low-dimensional manifold structure in the Markov decision process and obtain a sample-efficient estimator without suffering from the curse of high data ambient dimensionality. Specifically, we establish a sharp error bound for fitted Q-evaluation, which depends on the intrinsic dimension of the state-action space, the smoothness of Bellman operator, and a function class-restricted $\chi^2$-divergence. It is noteworthy that the restricted $\chi^2$-divergence measures the behavior and target policies' {\it mismatch in the function space}, which can be small even if the two policies are not close to each other in their tabular forms. We also develop a novel approximation result for convolutional neural networks in Q-function estimation. Numerical experiments are provided to support our theoretical analysis.
\end{abstract}


\section{Introduction}\label{sec:intro}
Off-policy Reinforcement Learning (RL) \citep{Lange2012,levine2020offline} is an important area in decision-making applications, when the data cannot be acquired with arbitrary policies. For example, in clinical decision-making problems, experimenting new treatment policies on patients is risky and may raise ethical concerns. Therefore, we are only allowed to generate data using certain policies (or sampling distributions), which have been approved by medical professionals. These so-called ``behavior policies'' are unknown but could impact our problem of interest, resulting in distribution shift and insufficient data coverage of the problem space. In general, the goal is to design algorithms that need as little data as possible to attain desired accuracy.

A crucial problem in off-policy RL is policy evaluation. The goal of Off-Policy Evaluation (OPE) is to estimate the value of a new target policy based on experience data generated by existing behavior policies. Due to the mismatch between behavior and target policies, the off-policy setting is entirely different from the on-policy one, in which policy value can be easily estimated via Monte Carlo. 

A popular algorithm to solve OPE is the fitted Q-evaluation method (FQE), as an off-policy variant of the fitted Q-iteration \citep{gottesman2020interpretable,pmlr-v119-duan20b,zhang2022off}. FQE iteratively estimates Q-functions by supervised regression using various function approximation methods, e.g., linear function approximation, and has achieved great empirical success \citep{voloshin2019empirical,fu2020d4rl,fu2021benchmarks}, especially in large-scale Markov decision problems. Complementary to the empirical studies, several works theoretically justify the success of FQE. Under linear function approximation, \citet{pmlr-v139-hao21b} show that FQE is asymptotically efficient, and \citet{pmlr-v119-duan20b} further provide a minimax optimal non-asymptotic bound, and \citet{min2021variance} provide a variance-aware characterization of the distribution shift via a weighted variant of FQE. \citet{zhang2022off} analyze FQE with realizable, general differentiable function approximation. \citet{duan2021optimal} focus on on-policy estimation and study a kernel least square temporal difference estimator for general function approximation.

Recently, deploying neural networks in FQE has achieved great empirical success, which is largely due to networks' superior flexibility of modeling in high-dimensional complex environments \citep{voloshin2019empirical,fu2020d4rl,fu2021benchmarks}. Nonetheless, the theory of FQE using deep neural networks has not been fully understood. While there are existing results on FQE with various function approximators \citep{gottesman2020interpretable,pmlr-v119-duan20b,zhang2022off}, many of them are not immediately applicable to neural network approximation. \citet{fan2020theoretical} focus on the online policy optimization problem and studies DQN with feed-forward ReLU network. Notably, a recent study \citet{nguyen2021sample} provide an analysis of the estimation error of nonparametric FQE using feed-forward ReLU network, yet this error bound grows quickly when data dimension is high. Moreover, their result requires full data coverage, i.e., every state-action pair has to eventually be visited in the experience data. Precisely, besides universal function approximation, there are other properties that contribute to the success of neural networks in supervised learning, for example, its ability to \textbf{adapt to the intrinsic low-dimensional structure of data}. While these properties are actively studied in the deep supervised learning literature, they have not been reflected in RL theory. Hence, it is of interest to examine whether these properties still hold under standard RL assumptions and how neural networks can take advantage of such low-dimensional structures in OPE. 

{\bf Main results.} This paper establishes sample complexity bounds of deep FQE using convolutional neural networks (CNNs). Different from existing results, our theory exploits the intrinsic geometric structures in the state-action space. This is motivated by the fact that in many practical high-dimensional applications, especially image-based ones \citep{supancic2017tracking, das2017learning, zhou2018deep}, the data are actually governed by a much smaller number of intrinsic free parameters \citep{allard2012multi,osher2017low,hinton2006reducing}. See an example in Figure \ref{fig:low-dim-game}. 

\begin{figure}[!htb]
    \centering
    \includegraphics[width=0.8\linewidth]{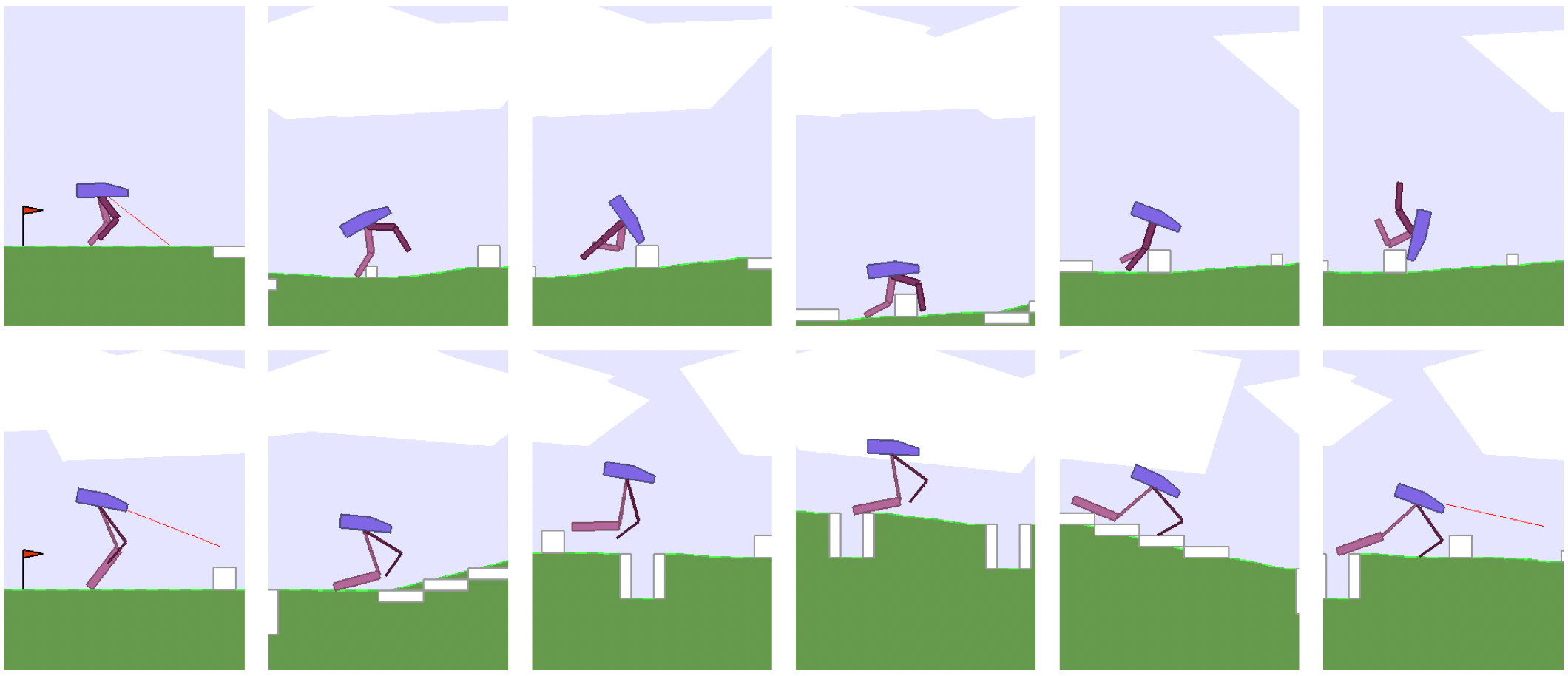}
    \caption{An example of state-action space with low-dimensional structures. The states of OpenAI Gym Bipedal Walker can be visually displayed in high resolution (e.g., 200 $\times$ 300), while they are internally represented by a $24$-tuple \citep{ha2019reinforcement}.}\label{fig:low-dim-game}
\end{figure}

Consequently, we model the state-action space as a $d$-dimensional Riemannian manifold embedded in $\RR^D$ with $d \ll D$. Under some standard regularity conditions, we show CNNs can efficiently approximate Q-functions and allow for fast-rate policy value estimation---free of the curse of ambient dimensionality $D$. Moreover, our results do not need strong data coverage assumptions. In particular, we develop a function class-restricted $\chi^2$-divergence to quantify the mismatch between the visitation distributions induced by behavior and target policies. The function class can be viewed as a smoothing factor of the distribution mismatch, since the function class may be insensitive to certain differences in the two distributions. Our approximation theory and mismatch characterization significantly sharpen the dimension dependence of deep FQE. In detail, our theoretical results are summarized as follows: 

\noindent {\bf (I)} Given a target policy $\pi$, we measure the distribution shift between the experience data distribution $\{q^{\rm data}_h\}_{h=1}^H$ and the visitation distribution of target policy $\{q_h^{\pi}\}_{h=1}^H$ by
\begin{align}\label{distribution-shift}
    \kappa=\frac{1}{H}\sum_{h=1}^{H}\sqrt{\chi_{\cQ}^2(q_{h}^\pi, q_h^{\rm data}) + 1},
    \end{align}
where $\chi_{\cQ}^2(q_{h}^\pi, q_h^{\rm data})$ is the restricted $\chi^2$-divergence between $q_h^{\pi}$ and $q^{\rm data}_h$ defined as 
    \begin{align*}
        \chi^2_{\cQ}(q_h^\pi, q_h^{\rm data}) = \sup_{f \in \cQ} \frac{\EE_{q_h^\pi}[f]^2}{\EE_{q_h^{\rm data}}[f^2]} - 1
    \end{align*}
  with $\cQ$ being a function space relevant to our algorithm.

\noindent {\bf (II)} We prove that the value estimation error of a target policy $\pi$ is 
\begin{align*}
\mathbb{E}|v^{\pi}-\hat{v}^{\pi}|=\tilde{
\cO}\left(\kappa H^2K^{-\frac{\alpha}{2\alpha+d}}\right),
\end{align*}
where $K$ is the effective sample size of experience data sampled by the behavior policy (more details in Section \ref{sec:alg}), $H$ is the length of the horizon, $\alpha$ is the smoothness parameter of the Bellman operator, $\kappa$ is defined in (\ref{distribution-shift}), and $\tilde{\cO}(\cdot)$ hides some constant depending on state-action space and a polynomial factor in $D$.

We compare our results with several related works in Table \ref{comparison-table}. Both \citet{pmlr-v119-duan20b} and \citet{zhang2022off} consider parametric function approximation to the Q-function: \citet{pmlr-v119-duan20b} study linear function approximation, and \citet{zhang2022off} assume third-order differentibility, which is not applicable to neural networks with non-smooth activation. On the other hand, \citet{nguyen2021sample} use feed-forward neural networks with ReLU activation to parametrize nonparametric Q-functions, but they do not take any low-dimensional structures of the state-action space into consideration. Therefore, their result suffer from the curse of dimension $D$. Moreover, they characterize the distribution shift with the absolute density ratio between the experience data distribution and the visitation distribution of target policy, which is strictly larger than our characterization in restricted $\chi^2$-divergences. As can be seen from our comparison, our result improves over existing results. Moreover, since CNNs have been widely used in deep RL applications and also retain state-of-the-art performance \citep{mnih2015human,pmlr-v139-fujimoto21a}, our consideration of CNNs is a further step of bridging practice and theory. 

\begin{table}[htb!]
\caption{$\kappa_1$ and $\kappa_2$ are  measures of distribution shift with respect to their respective regularity spaces; $D_{\rm eff}$ denotes the effective dimension of the function approximator in \citet{zhang2022off}, which usually suffers from the curse of dimensionality; $\kappa_3$ is the absolute density ratio between the data distribution and target policy's visitation distribution; $\kappa$ is defined in (\ref{distribution-shift}) and is no larger than and often substantially smaller than $\kappa_3$. See in-depth discussions in Section \ref{sec:results}.}
\begin{center}
    \begin{tabular}{c|c|c|c}
    \specialrule{.1em}{.05em}{.05em} 
        Work & Regularity & Approximation & Estimation Error\\
        \specialrule{.1em}{.05em}{.05em}
        &&& \\[-1em]
        \citet{pmlr-v119-duan20b} & Linear & None & $\tilde{\cO}\left(H^2\sqrt{\kappa_1D/K}\right)$\\
        \hline
        &&& \\[-1em]
        \citet{zhang2022off} & \shortstack{Third-time\\ differentiable} & None & $\tilde{\cO}\left(H^2\sqrt{\kappa_2D_{\rm eff}/K}\right)$\\ 
        \hline
        &&& \\[-1em]
        \citet{nguyen2021sample} & Besov & \shortstack{Feed-Forward\\ ReLU Net} & $\tilde{\cO}\left(H^{2-\alpha/(2\alpha+2D)}\kappa_3 {K}^{-\alpha/(2\alpha+2D)}\right)$\\
        \hline
        &&& \\[-1em]
        This work & Besov & CNN & $\tilde{\cO}\left(H^2\kappa K^{-\alpha/(2\alpha+d)}\right)$\\
    \specialrule{.1em}{.05em}{.05em} 
    \end{tabular}
\end{center}
\label{comparison-table}
\end{table}

\paragraph{Additional Related Work}
Besides FQE, there are other types of methods in the literature for solving OPE. One popular type is using importance sampling to reweigh samples by the distribution shift ratio \citep{precup00}, but the importance sampling suffers from large variance, which is exponential in the length of the horizon in the worst case. To address this issue, some variants with reduced variance such as marginal importance sampling (MIS) \citep{xie2019towards} and doubly robust estimation \citep{jiang2016doubly,thomas2016data} have been developed. For the tabular setting with complete data coverage, \citet{pmlr-v108-yin20b} show that MIS is an asymptotically efficient OPE estimator, which matches the Cramer-Rao lower bound in \citet{jiang2016doubly}. Recently, \citet{zhan2022offline} show a way to solve OPE with linear programming in absence of Bellman completeness. \citet{kallus2020double,uehara2020minimax} propose novel weight learning methods for general Q-functions, but they require stronger assumptions such as full data coverage.

\paragraph{Notation}
For a scalar $a > 0$, $\lceil a\rceil$ denotes the ceiling function, which gives the smallest integer which is no less than $a$; $\lfloor a\rfloor$ denotes the floor function, which gives the largest integer which is no larger than $a$. For any scalars $a$ and $b$, $a \vee b$ denotes $\max(a,b)$ and $a \wedge b$ denotes $\min(a,b)$. For a vector or a matrix, $\norm{\cdot}_0$ denotes the number of nonzero entries and $\norm{\cdot}_{\infty}$ denotes the maximum magnitude of entries. Given a function $f:\RR^D\to \RR$ and a multi-index $s=[s_1,\cdots,s_D]^\top$, $\partial^s f$ denotes $\frac{\partial^{|s|}f}{\partial x_1^{s_1}\cdots \partial x_D^{s_D}}$. $\norm{f}_{L^p}$ denote the $L^p$ norm of function $f$. We adopt the convention $0/0 = 0$. Given distributions $p$ and $q$, if $p$ is absolutely continuous with respect to $q$, the Pearson $\chi^2$-divergence is defined as $\chi^2(p,q) := \EE_q[(\frac{\ud p}{\ud q} - 1)^2]$.


\section{Preliminaries}\label{sec:prelim}
\subsection{Time-inhomogeneous Markov Decision Process}

We consider a finite-horizon time-inhomogeneous Markov Decision Process (MDP) $(\cS, \cA, \{P_h\}_{h=1}^H, \allowbreak \{R_h\}_{h=1}^H, H, \xi)$, where $\xi$ is the initial state distribution. At time step $h = 1, \cdots, H$, from a state $s$ in the state space $\cS$, we may choose action $a$ from the action space $\cA$ and transition to a random next state $s' \in \cS$ according to the transition probability distribution $s' \sim P_h(\cdot ~|~ s,a)$. Then, the system generates a random scalar reward $r \sim R_h(s,a)$ with $r\in[0,1]$. We denote the mean of $R_h(s,a)$ by $r_h(s,a)$. 

A policy $\pi = \{\pi_h\}_{h=1}^H$ specifies a set of $H$ distributions $\pi_h(\cdot ~|~ s)$ for choosing actions at every state $s\in \cS$ and time step $h$. Given a policy $\pi$, the state-action value function, also known as the \textit{Q-function}, for $h = 1,2,\cdots, H$, is defined as 
\begin{align*}
	Q_h^\pi(s,a) := \EE^\pi\bigg[\sum_{h'=h}^{H}r_{h'}(s_{h'}, a_{h'}) ~\Big|~ s_h = s, a_h = a\bigg],
\end{align*}
where $a_{h'}\sim\pi_{h'}(\cdot ~|~ s_{h'})$ and $s_{h'+1}\sim P_{h'}(\cdot ~|~ s_{h'},a_{h'})$. Moreover, let $q_h^\pi$ denote the \textit{state-action visitation distribution} of $\pi$ at step $h$, i.e., $q_{h}^\pi(s,a) := \PP^\pi\left[s_h=s, a_h=a ~|~ s_1\sim\xi\right]$.

For notational ease, we denote $\cX := \cS \times \cA$. Let $\cP_h^{\pi} : \RR^\cX \to \RR^\cX$ denote the \textit{conditional transition operator} at step $h$:
\begin{align*}
	\cP_h^{\pi} f(s,a) := \EE[f(s', a') ~|~ s, a], \ \forall f : \cX \to \RR,
\end{align*}
where $a' \sim \pi_h(\cdot ~|~ s')$ and $s' \sim P_h(\cdot ~|~ s,a)$. 

Denote the \textit{Bellman operator} at time $h$ under policy $\pi$ as $\cT_h^\pi$:
\begin{equation*}
	\cT_h^\pi f(s,a) := r_h(s,a) + \cP_h^\pi f(s,a), \ \forall f : \cX \to \RR,
\end{equation*}
where $a' \sim \pi_h(\cdot ~|~ s')$ and $s' \sim P_h(\cdot ~|~ s,a)$. Thus, the Bellman equation may be written as $Q_h^\pi = \cT_{h}^\pi Q_{h+1}^\pi$.

\subsection{Riemannian Manifold}

Let $\cM$ be a $d$-dimensional Riemannian manifold isometrically embedded in $\RR^D$. A {\it chart} for $\cM$ is a pair $(U,\phi)$ such that $U\subset \cM$ is open and $\phi: U \rightarrow \RR^d$ is a homeomorphism, i.e., $\phi$ is a bijection, its inverse and itself are continuous. Two charts $(U,\phi)$ and $(V,\psi)$ are called {\it $\cC^k$ compatible} if and only if 
$$
\phi\circ\psi^{-1}: \psi(U\cap V)\rightarrow \phi(U\cap V) \quad \mbox{ and } \quad \psi\circ\phi^{-1}: \phi(U\cap V)\rightarrow \psi(U\cap V)
$$
are both $\cC^k$ functions ($k$-th order continuously differentiable). A $\cC^k$ atlas of $\cM$ is a collection of $\cC^k$ compatible charts $\{(U_i,\phi_i)\}$ such that $\bigcup_i U_i=\cM$. An atlas of $\cM$ contains an open cover of $\cM$ and mappings from each open cover to $\RR^d$.

\begin{definition}[Smooth manifold]
	A manifold $\cM$ is smooth if it has a $\cC^{\infty}$ atlas.
\end{definition}

We introduce the reach \citep{federer1959curvature,niyogi2008finding} of a manifold to characterize the curvature of $\cM$.
\begin{definition}[Reach]
	The medial axis of $\cM$ is defined as $$\cT(\cM) = \{x \in \RR^D ~|~ \exists~ x_1\neq x_2\in\cM,\textrm{~such that~}\|x- x_1\|_2=\|x-x_2\|_2 = \inf_{y \in \cM} \norm{x - y}_2\}.$$ The reach $\omega$ of $\cM$ is the minimum distance between $\cM$ and $\cT(\cM)$, i.e.
	$$\omega = \inf_{x\in\cT(\cM),y\in\cM} \|x-y\|_2.$$
\end{definition}
Roughly speaking, reach measures how fast a manifold ``bends''. A manifold with a large reach ``bends'' relatively slowly. On the contrary, a small $\omega$ signifies more complicated local geometric structures, which are possibly hard to fully capture. 

\subsection{Besov Functions on Smooth Manifold}

Through the concept of atlas, we are able to define Besov space on a smooth manifold.
\begin{definition}[Modulus of Smoothness \citep{devore1993constructive}]
    Let $\Omega \subset \RR^D$. For a function $f: \RR^D\rightarrow\RR$ be in $ L^p(\Omega)$ for $p>0$, the $r$-th modulus of smoothness of $f$ is defined by
  \begin{align*}
  &w_{r,p}(f,t)=\sup_{\|h\|_2\le t} \|\Delta_{h}^r(f)\|_{L^p}, \text{ where}&\\
  &\Delta_{h}^r(f)(x)=\begin{cases}
    \sum_{j=0}^r \binom{r}{j}
    (-1)^{r-j}f(x+jh) &\mbox{ if }\  x\in \Omega, x+r h\in\Omega,\\
    0 &\text{ otherwise}.
  \end{cases}
\end{align*}
\end{definition}
\begin{definition}[Besov functions on $\cM$] Let $\cM$ be a compact manifold of dimension $d$. A function $f: \cM \mapsto \RR$ belongs to the Besov space $\Bnorm(\cM)$, if 
	\begin{align*}
		\|f\|_{\Bnorm(U)} & := \norm{f}_{L^p(U)} + \norm{f}_{\Bnorm(U)} < \infty,
	\end{align*}
	where 
	\begin{align*}
	    \norm{f}_{\Bnorm(U)} := \begin{cases}
	    \left(\int_0^\infty (t^{-\alpha}w_{\lfloor \alpha\rfloor+1,p}(f,t))^q\frac{\ud t}{t}\right)^{1/q} &\quad \text{if}\ q < \infty,\\
	    \sup_{t>0} t^{-\alpha}w_{\lfloor \alpha\rfloor+1,p}(f,t) &\quad \text{if}\ q = \infty.
	    \end{cases}
	\end{align*}
	For a fixed atlas $\{(U_i, \phi_i)\}$, the Besov norm of $f$ is defined as $\norm{f}_{\Bnorm(\cM)} = \sum_i \|f\|_{\Bnorm(U_i)}$. We occasionally omit $\cM$ in the Besov norm when it is clear from the context.
\end{definition}

\subsection{Convolutional Neural Network}

We consider one-sided stride-one convolutional neural networks (CNNs) with rectified linear unit (ReLU) activation function ($\ReLU(z)=\max(z,0)$). Specifically, a CNN we consider consists of a padding layer, several convolutional blocks, and finally a fully connected output layer.

Given an input vector $x \in\RR^{D}$, the network first applies a padding operator $P:\RR^{D}\rightarrow \RR^{D\times C}$ for some integer $C\geq 1$ such that
$$
Z=P(x)=\begin{bmatrix}
	x & 0 &\cdots & 0 
\end{bmatrix} \in \RR^{D \times C}.
$$
Then the matrix $Z$ is passed through $M$ convolutional blocks. We will denote the input matrix to the $m$-th block as $Z_m$ and its output as $Z_{m+1}$ (i.e., $Z_1 = Z$).

\begin{figure}[th!]
	\centering
	\includegraphics[height=3.2cm]{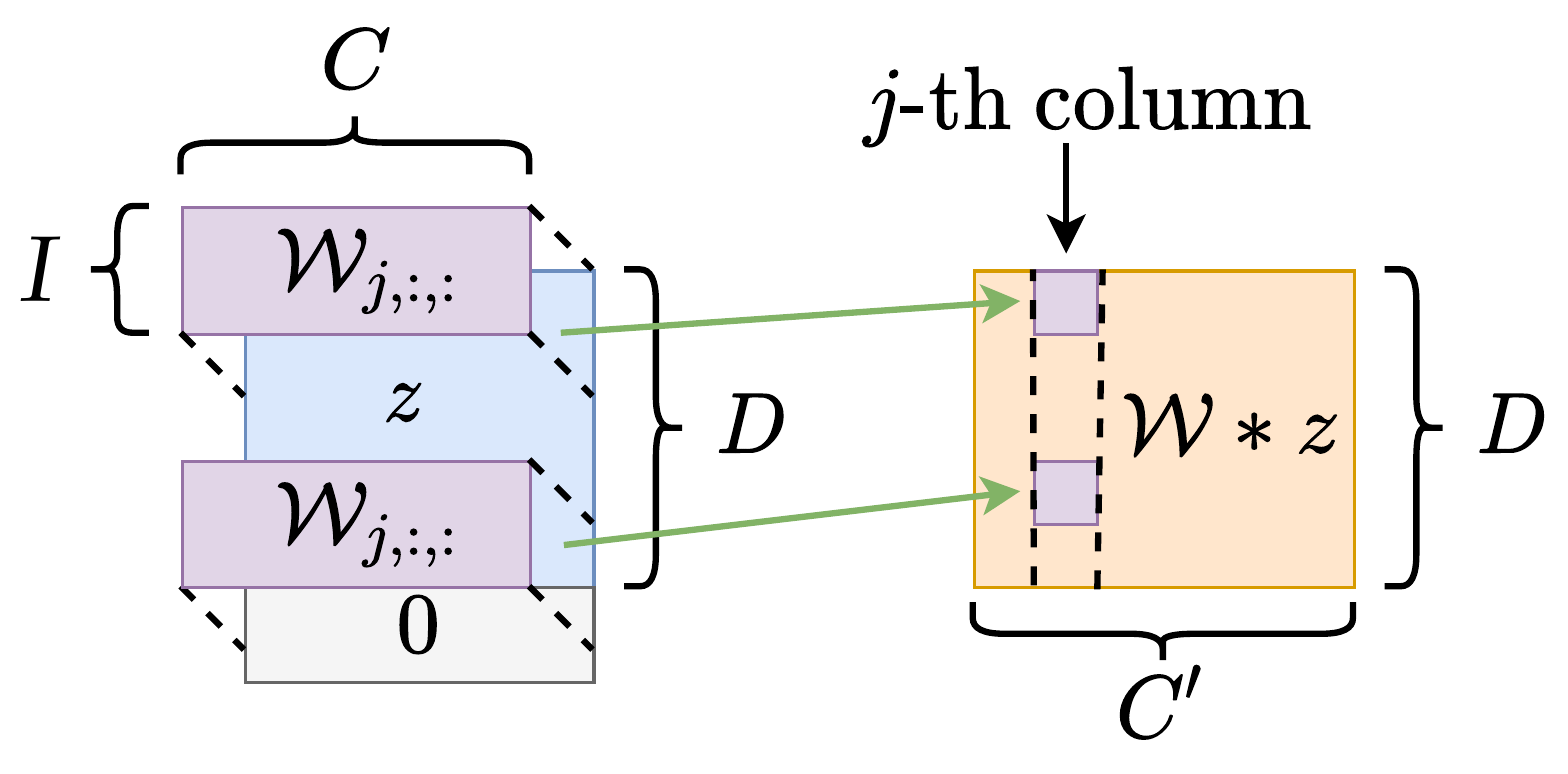}
	\caption{Convolution of $\cW *Z$. $\cW_{j,:,:}$ is a $D\times C$ matrix for the $j$-th output channel.
	}
	\label{fig.conv}
\end{figure}

First, we define convolution. Let $\cW=\{\cW_{j,i,l}\}\in \RR^{C'\times I\times C}$ be a filter where $C'$ is the output channel size, $I$ is the filter size and $C$ is the input channel size. For $Z\in \RR^{D\times C}$, the convolution of $\cW$ with $Z$, denoted with $\cW * Z$, results in $Y\in \RR^{D\times C'}$ with
\begin{align*}
	Y_{k,j}=\sum_{i=1}^I \sum_{l=1}^{C} \cW_{j,i,l} Z_{k+i-1,l},
\end{align*}
where we set $Z_{k+i-1,l}=0$ for $k+i-1>D$. 

In the $m$-th convolutional block, let
$\cW_m=\{ \cW_m^{(1)},...,\cW_m^{(L_m)}\}$ and $\cB_m=\{B_m^{(1)},...,B_m^{(L_m)}\}$ be a collection of filters and biases of proper sizes.
The $m$-th block maps its input matrix $Z_m\in\RR^{D\times C}$ to $Z_{m+1}\in\RR^{D\times C}$ by 
\begin{align} 
	&Z_{m+1}=\ReLU\Big( \cW_m^{(L_m)}*\cdots *\ReLU\left(\cW_m^{(1)}*Z_m+B_m^{(1)}\right)\cdots +B_m^{(L_m)}\Big)
	\label{eq.conv}
\end{align}
with $\ReLU$ applied entrywise. For notational simplicity, we denote this series of operations in the $m$-th block with a single operator from $\RR^{D\times C}$ to $\RR^{D\times C}$ with $\Conv_{\cW_m,\cB_m}$, so (\ref{eq.conv}) can be abbreviated as
\begin{align*}
	Z_{m+1}=\Conv_{\cW_m,\cB_m}(Z_m).
\end{align*}

Overall, we denote the mapping from input $x$ to the output of the $M$-th convolutional block as
\begin{align}
	G(x)=&\left(\Conv_{\cW_M,\cB_M}\right)\circ\cdots\circ \left(\Conv_{\cW_1,\cB_1}\right)\circ P(x).
	\label{eq:cnnBlock}
\end{align}

Given (\ref{eq:cnnBlock}), a CNN applies an additional fully connected layer to $G$ and outputs
\begin{align*}
	f(x)= W \otimes G(x)+b,
\end{align*}
where $W \in \RR^{D \times C}$ and $b \in \RR$ are a weight matrix and a bias, respectively, and $\otimes$ denotes sum of entrywise product, i.e. $W \otimes G(x) = \sum_{i, j} W_{i, j} [G(x)]_{i, j}$. Thus, we define a class of CNNs of the same architecture as
\begin{align}\label{eq:nn_class_no_mag}
	&\cF(M,L,J,I,\tau_1,\tau_2)=\nonumber\\
	&\big\{ f~|~ f(x)=W \otimes Q(x)+ b \mbox{ with } \|W\|_{\infty} \vee |b| \leq \tau_2,  G(x) \mbox{ in the form of (\ref{eq:cnnBlock}) with $M$ blocks.} \nonumber\\
	&\hspace{0.7cm} \mbox{The number of filters per block is bounded by }L;   \mbox{ filter size is bounded by } I; \mbox{ the number of } \nonumber \\
	& \hspace{0.7cm} \mbox{ channels is bounded by }J;  \max_{m,l}\|\cW_m^{(l)}\|_{\infty} \vee \|B_m^{(l)}\|_{\infty} \leq \tau_1\big\}.
\end{align}
Furthermore, $\cF(M,L,J,I,\tau_1,\tau_2, V)$ is defined as 
\begin{align}\label{eq:nn_class}
    \cF(M,L,J,I,\tau_1,\tau_2, V)=\left\{f\in \cF(M,L,J,I,\tau_1,\tau_2) ~|~ \norm{f}_{L^\infty}\le V \right\}.
\end{align}

\textbf{Universal Approximation of Neural Networks} Neural networks are well-known for their universal approximation abilities. There exists rich literature \citep{irie1988capabilities, funahashi1989approximate, cybenko1989approximation, hornik1991approximation, chui1992approximation, leshno1993multilayer, barron1993universal, mhaskar1996neural, lu2017expressive, hanin2017universal, daubechies2019nonlinear} about using neural networks to approximate functions supported on compact domain in Euclidean space, from early asymptotic results \citep{irie1988capabilities, funahashi1989approximate, cybenko1989approximation, hornik1991approximation} to later quantitative results \citep{barron1993universal, mhaskar1996neural, lu2017expressive, hanin2017universal, yarotsky2017error}. These results suffer from the curse of dimensionality, in that to approximate a function up to certain error, the network size grows exponentially with respect to the data dimension. Recently, a line of work advances the approximation theory of neural network to functions supported on domain with intrinsic geometric structures \citep{chui2016deep, shaham2018provable, suzuki2018adaptivity, shen2019deep, chen2021nonparametric, pmlr-v139-liu21e}. They show that neural networks are adaptive to the intrinsic structures in data, suggesting that to approximate a function up to certain error, it suffices to choose the network size only depending on such intrinsic dimension, which is often much smaller than the representation dimension of the data. In addition to existing results, we prove a novel universal approximation theory of CNN in this paper.


\section{Neural Fitted Q-Evaluation}\label{sec:alg}

We consider the off-policy evaluation (OPE) problem of a finite-horizon time-inhomogeneous MDP. The transition model $\{P_h\}_{h=1}^H$ and reward function $\{R_h\}_{h=1}^H$ are unknown, and we are only given the access to an unknown \textit{behavior policy} $\pi_0$ to generate experience data from the MDP. Our objective is to evaluate the value of $\pi$ from a fixed initial distribution $\xi$ over horizon $H$, given by
\begin{align*}
    v^\pi := \EE^\pi\bigg[\sum_{h=1}^{H}r_h(s_h, a_h) ~\Big|~ s_1 \sim \xi\bigg],
\end{align*}
where $a_h\sim\pi(\cdot ~|~ s_h)$ and $s_{h+1}\sim P_h(\cdot ~|~ s_h,a_h)$. 

At time-step $h$, we generate data $\cD_h := \{(s_{h,k}, a_{h,k}, s_{h,k}', r_{h,k})\}_{k=1}^K$. Specifically, $\{s_{h,k}\}_{k=1}^K$ are i.i.d. samples from some state distribution. For each $s_{h,k}$, we use the unknown behavior policy $\pi_0$ to generate $a_{h,k} \sim \pi_0(\cdot ~|~ s_{h,k})$. More generally, we may view $\{(s_{h,k}, a_{h,k})\}_{k=1}^K$ as i.i.d. samples from a sampling distribution $q_h^{\pi_0}$. For each $(s_{h,k}, a_{h,k})$, we can further generate $s_{h,k}' \sim P_h(\cdot ~|~ s_{h,k},a_{h,k})$ and $r_{h,k} \sim R_h(s_{h,k},a_{h,k})$ independently for each $k$. 

This assumption on data generation is the time-inhomogeneous analog of a standard data assumption in time-homogeneous OPE, which assumes all data are i.i.d. samples from the same sampling distribution \citep{nguyen2021sample,zhan2022offline,nachum2019dualdice,xie2021bellman}. Moreover, our dataset is similar to one that comprises $K$ independent episodes generated by the behavior policy, as \citet{li2022settling} introduce a subroutine whereby one can process an episodic dataset and treat it as an i.i.d.-sampled dataset in any downstream algorithm.

To estimate $v^{\rm \pi}$, neural FQE estimates $Q^\pi_h$ in a backward, recursive fashion. $\widehat Q_{h}^\pi$, our estimate at step $h$, is taken as $\mathcal{\hat T}_h^\pi \left(\widehat Q_{h+1}^\pi\right)$, whose update rule is based on the Bellman equation: 
\begin{align}\label{eq:FQE-update}
\mathcal{\hat T}_h^\pi \left(\widehat Q_{h+1}^\pi\right)= \arg\min_{f\in\cF} \sum_{k=1}^{K} \bigg(f(s_{h,k},a_{h,k}) - r_{h,k} - \int_\cA \widehat{Q}_{h+1}^\pi(s_{h,k}',a)\pi_h(a_{h,k} ~|~ s_{h,k}') \ud a\bigg)^2,
\end{align}
where $\mathcal{\hat T}_h^\pi$ is an intermediate estimate of the Bellman operator $\mathcal{T}_h^\pi$, $\widehat{Q}_{h+1}^\pi$ is an intermediate estimate of $Q_{h+1}^\pi$, and $\cF$ denotes a proper class of convolutional neural networks, as specified in (\ref{eq:nn_class}), whose architecture parameters are to be specified in Theorem \ref{thm:FQE}. The pseudocode for our algorithm is presented in Algorithm \ref{alg:Neural-OPE}.

\begin{algorithm}[htb!]
\begin{algorithmic}
    \STATE \textbf{Input:} Initial distribution $\xi$, target policy $\pi$, horizon $H$, effective sample size $K$, function class $\cF$. \\
        \STATE \textbf{Init:} $\widehat{Q}_{H+1}^\pi := 0$\\
    \FOR{$h = H, H-1, \cdots, 1$}
    \STATE Sample $\cD_h = \{(s_{h,k}, a_{h,k}, s_{h,k}', r_{h,k})\}_{k=1}^K$.
    \STATE Update $\widehat{Q}_h^\pi \leftarrow \mathcal{\hat T}_h^\pi \left(\widehat Q_{h+1}^\pi\right)$ by (\ref{eq:FQE-update}).
    \ENDFOR
    \STATE \textbf{Output:} $\widehat{v}^\pi := \int_\cX \widehat{Q}_1^\pi(s,a)\xi(s)\pi(a ~|~ s) \ud s\ud a$.
\end{algorithmic}
\caption{Neural Fitted Q-Evaluation (Neural-FQE)}\label{alg:Neural-OPE}
\end{algorithm}


\section{Main Results}\label{sec:results}

We prove an upper bound on the estimation error of $v^\pi$. Before presenting our main results, we first introduce two assumptions on the MDP of interest.

\begin{assumption}[Low-dimensional state-action space]\label{assumption:modelspace}
    \textit{The state-action space $\cX$ is a  $d$-dimensional compact Riemannian manifold isometrically embedded in $\RR^D$, i.e., $\cX\subseteq\cM$. There exists $B > 0$ such that $\norm{x}_\infty \le B$ for any $x\in\cX$. The reach of $\cX$ is $\omega > 0$.}
\end{assumption}

Assumption \ref{assumption:modelspace} characterizes the low-dimensional structures of the MDP represented in high dimensions. We say that the ``\textit{intrinsic dimension}'' of $\cX$ is $d \ll D$. Such a setting, as mentioned in Section \ref{sec:intro}, is common in practice, because the representation or feature people have access to are often excessive compared to the latent structures of the problem. For instance, images of a dynamical system are widely believed to admit such low-dimensional latent structures \citep{gong2019intrinsic, hinton2006reducing, osher2017low}. People often take the visual display of a computer game as its state representations, which are in pixels, but computer only keeps a small number of parameters internally to represent the state of the game. 

\begin{assumption}[Bellman completeness]\label{assumption:completeness}
    \textit{Under target policy $\pi$, for any time step $h$ and any $f \in \cF$, we have $\cT_h^{\pi} f \in \Bnorm(\cX)$, where $0<p,q\leq \infty$ and $ d/p+1\leq \alpha<\infty$. Moreover, there exists a constant $c_0>0$ that satisfies $\norm{\cT_h^{\pi} f}_{\Bnorm(\cM)}\leq c_0$ for any time step $h$.}
\end{assumption}

Bellman completeness assumption is about the closure of a function class under the Bellman operator. It has been widely adopted in RL literature  \citep{yang2020function, Du2020Is, chen2019information, xie2021bellman}. Some classic MDP settings implicitly possess this property, e.g., linear MDP \citep{jin2020provably}. Note that \citet{wang2021what} show the necessity of such an assumption on the Bellman operator to regulate the Bellman residual: without such assumption, even in the simple setting of linear function approximation with realizability, to solve OPE up to a constant error, the lower bound on sample complexity is exponential in horizon.

The Besov family contains a large class of smooth functions, and has been widely adopted in existing nonparametric statistics literature for various problems \citep{geller2011band,triebel1983theory,triebel1994theory}. For MDP, Assumption \ref{assumption:completeness} holds for most common smooth dynamics, as long as certain regularity conditions on smoothness are satisfied. For instance, \citet{nguyen2021sample} show a simple yet sufficient condition, under which for any time step $h$ and $s'\in \cS$, the reward function $r_h(s,a)$ and the transition kernel $P_h(s'|s,a)$ are functions in $\Bnorm$. This further implies $Q_0^\pi, \cdots, Q_H^\pi\in \Bnorm(\cX)$. In addition, Assumption \ref{assumption:completeness} may be satisfied even when the transition kernel is not smooth, examples of which are provided in \citet{fan2020theoretical}.

Note that while most existing work on function approximation assumes Bellman completeness with respect to the function approximator, which in our work is deep convolutional neural networks with ReLU activation, we are only concerned with the closure of the Besov class $\Bnorm(\cX)$ under the Bellman operator. This assumption is weaker than those in the previous work \citet{zhang2022off}, which considers smooth function approximation (excluding ReLU networks).

Our main estimation error of neural FQE is summarized in the following theorem.
\begin{theorem}[]\label{thm:FQE}
    \textit{Suppose Assumption \ref{assumption:modelspace} and \ref{assumption:completeness} hold. By choosing
    \begin{align}\label{eq:F-param}
    & \hspace{0.15in} L=O(\log K + D+\log D), ~ J=O(D), ~ \tau_1=O(1),\nonumber \\ & \log\tau_2=O(\log^2 K + D\log K), ~M=O(K^{\frac{d}{2\alpha+d}}), ~V = H
    \end{align}
    with any integer $I\in[2,D]$ in Algorithm \ref{alg:Neural-OPE}, in which $O(\cdot)$ hides factors depending on $d$, $\alpha$, $\frac{2d}{\alpha p-d}$, $p$, $q$, $c_0$, $B$, $\omega$, and the surface area of $\cX$, we have
    \begin{align}
        \EE \left|v^\pi - \widehat{v}^\pi\right| \le C H^2 \kappa K^{-\frac{\alpha}{2\alpha+d}}\log^{\frac{5}{2}}K,
    \end{align}
    in which the expectation is taken over the data, and $C$ is a constant depending on $D^{\frac{3\alpha}{2\alpha + d}}$, $d$, $\alpha$, $\frac{d}{\alpha p-d}$, $p$, $q$, $c_0$, $B$, $\omega$ and the surface area of $\cX$. The distributional mismatch is captured by $$\displaystyle\kappa = \frac{1}{H}\sum_{h=1}^{H}\sqrt{\chi_{\cQ}^2(q_{h}^\pi, q_{h}^{\pi_0}) + 1},$$ in which $q_{h}^\pi$ and $q_{h}^{\pi_0}$ are the visitation distributions of $\pi$ and $\pi_0$ at step $h$ respectively and $\cQ$ is the Minkowski sum between the CNN function class and the Besov function class, i.e., $\cQ = \{f+g ~|~ f\in\Bnorm(\cX), g\in\cF\}$.
    }
\end{theorem}

The proof is provided in Appendix \ref{sec:appendix-proof-FQE}. We next compare our Theorem \ref{thm:FQE} with existing work:

\noindent \textbf{(I) Tight characterization of distributional mismatch}. The term $\kappa$ depicts the distributional mismatch between the target policy's visitation distribution and data coverage via restricted $\chi^2$-divergence. Note that the restricted $\chi^2$-divergence is always no larger than the commonly-used absolute density ratio \citep{nguyen2021sample,chen2019information,xie2020q} and can often be substantially smaller. This is because probability measures $q_{h}^\pi$ and $q_{h}^{\pi_0}$ might differ a lot over some small regions in the sample space, while their integrations of a smooth function in $\cQ$ over the entire sample space could be close to each other. The absolute density ratio measures the former and restricted $\chi^2$-divergence measures the latter. 

More strikingly, when considering function approximation (e.g. state-action space is not countably finite), the restricted $\chi^2$-divergence can still remain small even when absolute density ratio becomes unbounded. For example, we consider two isotropic multivariate Gaussian distributions with different means. \citet{nielsen2013chi} has shown that Pearson $\chi^2$-divergence, which is always larger than or equal to restricted $\chi^2$-divergence, has a finite expression:
    \begin{align*}
        \chi^2\left(\cN(\mu_1, I), \cN(\mu_2, I)\right) = e^{\|\mu_1-\mu_2\|_2^2} - 1,
    \end{align*}
    whereas one may find the absolute density ratio unbounded: for any $\mu_1 \neq \mu_2$,
    \begin{align*}
        \norm{\frac{\ud \cN(\mu_1, I)}{\ud \cN(\mu_2, I)}}_\infty = \sup_{x}\exp\left(x^\top(\mu_1-\mu_2) - \frac{1}{2}\norm{\mu_1}^2 + \frac{1}{2}\norm{\mu_2}^2\right) = \infty.
    \end{align*}

Such a stark comparison can also be observed in other common distributions that have support with infinite cardinality, e.g. Poisson distribution. 

Furthermore, when the state-action space exhibits small intrinsic dimensions, i.e. $d \ll D$, the restricted $\chi^2$-divergence adapts to such low-dimensional structure and characterizes the distributional mismatch with respect to $\cQ$, which is a small function class depending on the intrinsic dimension. In contrast, the absolute density ratio in  \citet{nguyen2021sample} does not take advantage of the low-dimensional structure. 

In summary, though the absolute density ratio is a tight in the tabular setting and some other special classes of MDPs, in the general function approximation setting, it could easily become intractably vacuous, and restricted $\chi^2$-divergence is tighter characterization of distributional mismatch.

\noindent \textbf{(II) Adaptation to intrinsic dimension}. Note that our estimation error is dominated by the intrinsic dimension $d$, rather than the representation dimension $D$. Therefore, it is significantly smaller than the error of methods oblivious to the problem's intrinsic dimension such as \citet{nguyen2021sample}. 

Such a fast convergence owes to the adaptability of neural networks to the manifold structure in the state-action space. With properly chosen width and depth, the neural network automatically captures local geometries on the manifold through the empirical risk minimization in Algorithm \ref{alg:Neural-OPE} for approximating Q-functions. 

\noindent \textbf{(III) CNN architecture}. The CNN architecture specified in Theorem \ref{thm:FQE} achieves an efficient approximation to Q-functions with Besov regularity. It can be observed that the network size has a very weak dependence on data ambient dimension $D$. In fact, the configuration in (\ref{eq:F-param}) is a consequence of our novel CNN universal approximation theory.
\begin{theorem}\label{thm:approx}
	\textit{For any positive integers $I \in [2, D]$ and $\widetilde{M}, \widetilde{J}>0$, we let 
	    \begin{gather}
    		\hspace{0.35in} L=O\left(\log (\widetilde{M}\widetilde{J})+D+\log D\right),\ J=O(D\widetilde{J}),\ \tau_1=(8ID)^{-1}\widetilde{M}^{-\frac{1}{L}}=O(1),\nonumber\\ \log \tau_2=O\left(\log^2 \widetilde{M}\widetilde{J} + D\log \widetilde{M}\widetilde{J}\right), \ M=O(\widetilde{M}),
    	\end{gather}
	Then the CNN architecture $\cF(M,L,J,I,\tau_1,\tau_2)$ can approximate any function $f_0 \in \Bnorm(\cX)$ with $p,q,\alpha$ satisfying Assumption \ref{assumption:completeness}, i.e., there exists $\widehat{f} \in \cF(M,L,J,I,\tau_1,\tau_2)$ with
	\begin{align}\label{eq:approx-accuracy}
		\|\widehat{f}-f_0\|_{\infty} \leq (\widetilde{M}\widetilde{J})^{-\frac{\alpha}{d}}.
	\end{align}
	Here $O(\cdot)$ hides a constant depending on $d$, $\alpha$, $\frac{2d}{\alpha p-d}$, $p$, $q$, $c_0$, $B$, $\omega$ and the surface area of $\cX$.
	}
\end{theorem}
A proof overview is provided in \ref{sec:thm-approx-overview}, with the proof provided in Appendix \ref{sec:appendix-proof-approx}. We remark that Theorem \ref{thm:approx} allows an arbitrary rescaling of $\tilde{M}$ and $\tilde{J}$, as only their product is relevant to the approximation error. This is more flexible than conventional approximation theories \citep{chen2021nonparametric,pmlr-v97-oono19a,nguyen2021sample}, where the network width and depth have to maintain a fixed ratio in terms of the desired approximation error. In Theorem \ref{thm:FQE}, we choose a configuration of $\tilde{M}$ and $\tilde{J}$ that leads to the optimal statistical rate via a bias-variance tradeoff argument.

\paragraph{\textit{Sample Complexity Comparison}.} Given a pre-specified estimation error of policy value $\epsilon$, our algorithm requires a sample complexity of  $$\widetilde{O}(H^{4+2d/\alpha}\kappa^{2+d/\alpha}\epsilon^{-2-d/\alpha}).$$
We next compare our result with \citet{nguyen2021sample}, which among existing work is the most similar to ours. Specifically, we reprove our result with feed-forward ReLU network so as to be in the same setting as \citet{nguyen2021sample} (details in Theorem \ref{thm:FQE-relu} of Appendix \ref{sec:appendix-proof-FQE-relu}).

When the experience data are allowed to be reused, they show a sample complexity of $$\widetilde{O}(H^{2+2D/\alpha}\kappa_3^{2+2D/\alpha}\epsilon^{-2-2D/\alpha}).$$ As can be seen, our result is more efficient than theirs as long as $H^{1- \frac{D-d}{\alpha}} \leq \epsilon^{-1-\frac{D - d/2}{\alpha}}$. Such a requirement of the horizon can be satisfied in real applications, as $d \ll D$ and $\alpha$ is moderate. Note that even with no consideration for low-dimensional structures, i.e., $d=D$, our result is still more efficient, as $\kappa$ is often substantially smaller than $\kappa_3$. Moreover, when the experience data are used just for one pass, our method is instantly more efficient, as their sample complexity becomes $$\widetilde{O}(H^{4+2D/\alpha}\kappa_3^{2+D/\alpha}\epsilon^{-2-D/\alpha}).$$ 

\subsection{Proof Overview of Theorem \ref{thm:approx}}\label{sec:thm-approx-overview}

Theorem \ref{thm:approx} serves as a building block for Theorem \ref{thm:FQE}, which establishes the relation between network architecture and approximation error. In this section, we outline the key steps in the proof of Theorem \ref{thm:approx}. Details are provided in Appendix \ref{sec:appendix-proof-approx}.

\subsection*{Step 1: Decompose $f$ as sum of locally supported functions over manifold}

Since manifold $\cX$ is assumed compact (Assumption \ref{assumption:modelspace}), we can cover it with a finite set of $D$-dimensional open Euclidean balls $\{B_\beta(\cbb_i)\}_{i=1}^{C_\cX}$, where $\cbb_i$ denotes the center of the $i$-th ball and $\beta$ is its radius. We choose $\beta < \omega/2$, and define $U_i = B_\beta(\cbb_i)\cap \cX$. Note that each $U_i$ is diffeomorphic to an open subset of $\RR^d$ (Lemma 5.4 in \citet{niyogi2008finding}); moreover, $\{U_i\}_{i=1}^{C_\cX}$ forms an open cover for $\cX$. There exists a carefully designed open cover with cardinality $C_\cX \le \lceil\frac{A(\cX)}{\beta^d}T_d\rceil$, where $A(\cX)$ denotes the surface area of $\cX$ and $T_d$ denotes the thickness of $U_i$'s, i.e., the average number of $U_i$'s that contain a given point on $\cX$. $T_d$ is $O(d\log d)$ (\citet{10.5555/39091}).

Moreover, for each $U_i$, we can define a linear transformation
\begin{align*}
\phi_i(x) = a_iV_i^{\top}(x-\cbb_i)+b_i,
\end{align*}
where $a_i\in \RR$ is the scaling factor and $b_i\in\RR^d$ is the translation vector, both of which are chosen to ensure $\phi(U_i) \subset [0,1]^d$, and the columns of $V_i\in\RR^{D\times d}$ form an orthonormal basis for the tangent space $T_{\cbb_i}(\cX)$. Overall, the atlas $\{(\phi_i, U_i)\}_{i=1}^{C_\cX}$ transforms each local neighborhood on the manifold to a $d$-dimensional cube.

Thus, we can decompose $f_0$ using this atlas as
\begin{align}
	f_0=\sum_{i=1}^{C_{\cX}} f_i \quad \textrm{with}\quad f_i=f\rho_i,
\end{align}
because there exists such a $C^\infty$ partition of unity $\{\rho_i\}_{i=1}^{C_\cX}$ with $\supp(\phi_i)\subset U_i$ (Proposition 1 in \citet{pmlr-v139-liu21e}). Since each $f_i$ is only supported on $U_i$, we can further write
\begin{align}\label{eq:f_0-decomp}
	f_0=\sum_{i=1}^{C_{\cX}} \left(f_i \circ \phi_i^{-1}\right) \circ \phi_i \times \mone_{U_i} \quad \textrm{with}\quad f_i=f\rho_i,
\end{align}
where $\mone_{U_i}$ is the indicator for membership in $U_i$. 

Lastly, we extend $f_i \circ \phi_i^{-1}$ to entire $[0,1]^d$ with $0$, which is a function in $\Bnorm([0,1]^d)$ with $\Bnorm([0,1]^d)$ Besov norm at most $Cc_0$ (Lemma 4 in \citet{pmlr-v139-liu21e}), where $C$ is a constant depending on $\alpha$, $p$, $q$ and $d$. This extended function is to be approximated with cardinal B-splines in the next step.

\subsection*{Step 2: Approximate each local function with cardinal B-splines}

With most things connected with the intrinsic dimension $d$ in the last step, we proceed an approximation of $f_0$ on the low-dimensional manifold. With $\alpha \ge d/p+1$ assumed in Assumption \ref{assumption:completeness}, we can invoke a classic result of using cardinal B-splines to approximate Besov functions (Lemma \ref{lem:spline-approx-besov}), by setting $r=+\infty$ and $m=\lceil\alpha\rceil+1$ in the lemma. It states that there exists a weighted sum of cardinal B-splines $\widetilde{f}_i$ in the form
\begin{align}\label{eq:spline-form}
    \tf_i\equiv\sum_{j=1}^N \tf_{i,j} \approx f_i \circ \phi_i^{-1} ~~\textrm{with}~~ \tf_{i,j}=c^{(i)}_{k,\jb}\widetilde{g}_{k,\jb,m}^d
\end{align}
such that 
\begin{align} 
	\norm{\tf_i-f_i\circ\phi_i^{-1}}_{L^\infty} \leq Cc_0N^{-\alpha/d}.
\end{align}
In (\ref{eq:spline-form}), $c^{(i)}_{k,\jb}\in \RR$ is coefficient and $\widetilde{g}_{k,\jb,m}^d:[0,1]^d\rightarrow \RR$ denotes a cardinal B-spline with index $k,m\in \NN^+,\jb\in \RR^d$. $k$ is a scaling factor, $\jb$ is a shifting vector, $m$ is the degree of the B-spline.

By (\ref{eq:f_0-decomp}) and (\ref{eq:spline-form}), we now have a sum of cardinal B-splines 
\begin{align}\label{eq:tf-decomp}
	\tf\equiv \sum_{i=1}^{C_{\cX}} \tf_i\circ\phi_i\times \mone_{U_i}=\sum_{i=1}^{C_{\cX}}\sum_{j=1}^N \tf_{i,j}\circ\phi_i\times \mone_{U_i}.
\end{align}
which can approximate our target Besov function $f_0$ with error
\begin{align}
    \norm{\tf-f_0}_{L^\infty} \leq CC_{\cX}c_0 N^{-\alpha/d}.
\end{align}

\subsection*{Step 3: Approximate each cardinal B-spline with a composition of CNNs}

Each summand in (\ref{eq:tf-decomp}) is a composition of functions, each of which we can implement with a CNN. Specifically, we do so with a special class of CNNs defined in (\ref{eq:cnn_class}), which we refer to as "single-block CNNs".

The multiplication operation $\times$ can be approximated by a single-block CNN $\htimes$ with at most $\eta$ error in the $L^\infty$ sense (Proposition \ref{prop:A1-bound}). $\htimes$ needs $O(\log \frac{1}{\eta})$ layers and $6$ channels. All weight parameters are bounded by $(c_0^2 \vee 1)$.

We consider each $\widetilde{f}_{i} \circ \phi_i$ together, which we can approximate with a sum of $N$ CNNs $\widehat{f}_{i,j}^{\rm SCNN} \circ \widehat{\phi}_i$ up to $\delta$ error, namely,
\begin{align*}
    \left\|\sum_{j=1}^N \widehat{f}^{\rm SCNN}_{i,j}-\widetilde{f}_i\circ \phi_i^{-1}\right\|_{L^\infty}\leq \delta.
\end{align*}
In particular, we can use a single-block CNN $\widehat{f}_{i,j}^{\rm SCNN}$ to approximate the B-spline $\widetilde{f}_{i,j}$ up to $\delta/N$ error. Moreover, since $\phi_i$ is linear, it can be expressed with a single-layer perceptron $\widehat{\phi}_i$. The architecture and size of $\widehat{f}_{i,j}^{\rm SCNN}$ and $\widehat{\phi}_i$ are characterized in  Proposition \ref{prop:A2-bound} as functions of $\delta$.

$\mone_{U_i}$ is an indicator for membership in $U_i$, so we need $\mone_{U_i}(x) = 1$ if $d_i^2(x) = \norm{x - \cbb_i}_2^2 \le \beta^2$ and $\mone_{U_i}(x) = 0$ otherwise. By this definition, we can write $\mone_{U_i}$ as a composition of a univariate indicator $\mone_{[0,\beta^2]}$ and the distance function $d_i^2$:
\begin{align}\label{eq:indicator-def}
\mone_{U_i}(x)=\mone_{[0,\beta^2]}\circ d_i^2(x) \quad \textrm{for} \quad x\in\cX.
\end{align}
Given $\theta\in(0,1)$ and $\Delta \ge 8DB^2\theta$, it turns out that $\mone_{[0,\beta^2]}$ and $d_i^2$ can be approximated with two single-block CNNs $\moned$ and $\hdi$ respectively (Proposition \ref{prop:A3-bound}) such that 
\begin{align}
	\norm{\hdi-d_i^2}_{L^{\infty}}\leq 4B^2D\theta
\end{align}
and
\begin{align}
	\moned\circ\hdi(x)=\begin{cases}1,   \mbox{ if } x\in U_i, d_i^2(x)\leq \beta^2-\Delta,\\	
	0,   \mbox{ if } x\notin U_i,\\
	\mbox{some value between 0 and 1},   \mbox{ otherwise}.
\end{cases}
\end{align}
The architecture and size of $\moned$ and $\hdi$ are characterized in Proposition \ref{prop:A3-bound} as functions of $\theta$ and $\Delta$. 

The above three approximations rely on the classic result of using CNN to approximate cardinal B-splines (Lemma 10 in \citet{pmlr-v139-liu21e}; Lemma 1 in \citet{suzuki2018adaptivity}). Putting the above together, we can develop a composition of single-block CNNs 
\begin{align} 
    \bar{f}_{i,j}\equiv\htimes\left(\widehat{f}_{i,j}^{\rm SCNN}\circ \widehat{\phi}_i,  \moned\circ\hdi\right)
    \label{eq:bfij}
\end{align}
as an approximation for $\tf_{i,j}\circ \phi_i \times \mone_{U_i}$. The overall approximation error of $\bar{f}_{i,j}$ can be written as a sum of the three types of approximation error above. Details are provided in Appendix \ref{sec:appendix-proof-approx}. Moreover, by Lemma \ref{lem:cnn-composition}, there exists a single-block CNN $\widehat{f}_{i,j}$ that can express $\bar{f}_{i,j}$.

\subsection*{Step 4: Express the sum of CNN compositions with a CNN}

Finally, we can assemble everything into $\widehat{f}$
\begin{align}
	\widehat{f} \equiv\sum_{i=1}^{C_{\cX}} \sum_{j=1}^N \widehat{f}_{i,j},
\end{align}
which serves as an approximation for $f_0$. By choosing the appropriate network size in Lemma \ref{lem:approx-err-decomp}, which the tradeoff between the approximation error of $\widehat{f}_{i,j}$ and its size, we can ensure that
\begin{align}
    \norm{\widehat{f}-f_0}_{L^\infty}\le N^{-\alpha/d}.
\end{align}
By Lemma \ref{lem:CNN-adap}, for $\widetilde{M},\widetilde{J}>0$, we can write this sum of $N\cdot C_\cX$ single-block CNNs as a sum of $\widetilde{M}$ single-block CNNs with the same architecture, whose channel number upper bound $J$ depends on $\widetilde{J}$. This allows Theorem \ref{thm:approx} to be more flexible with network architecture. By Lemma \ref{lem:cnn-to-convnet}, this sum of $\widetilde{M}$ CNNs can be further expressed as one CNN in the CNN class (\ref{eq:nn_class}). Finally, $N$ will be chosen appropriately as a function of network architecture parameters, and the approximation theory of CNN is proven.

When Theorem \ref{thm:approx} is applied in our problem setting, we will take the target function $f_0$ above to be $\cT_h^{\pi} \widehat{Q}_{h+1}^\pi$ at each time step $h$, which is the ground truth of the regression at each step of Algorithm \ref{alg:Neural-OPE}. For more details about this, please refer to the proof of Theorem \ref{thm:FQE} in Appendix \ref{sec:appendix-proof-FQE}.


\section{Experiments}\label{sec:exp}

We present numerical experiments for evaluating FQE with CNN function approximation on the classic CartPole environment \citep{6313077}. The CartPole problem has a $4$-dimensional continuous intrinsic state space. We consider a finite-horizon MDP with horizon $H=100$ in this environment. In our experiments, we solve the OPE problem with FQE as described in Section \ref{sec:alg}. We take the visual display of the environment as states. These images serve as a high-dimensional representation of CartPole's original $4$-dimensional continuous state space. In our algorithm, we use a deep neural network to approximate the Q-functions and solve the regression in Algorithm \ref{alg:Neural-OPE} with SGD (see Appendix \ref{sec:appendix-exp} for details). 

\begin{table}[htb!]
\caption{Value estimation $\widehat{v}^\pi$ under high resolution and low resolution. The true $v^\pi \approx 65.2$ is computed via Monte Carlo rollout.}
\begin{center}
    \begin{tabular}{c|c|c|c|c}
        \specialrule{.1em}{.05em}{.05em} 
        Sample size & \multicolumn{2}{c|}{(A) No distribution shift} & \multicolumn{2}{c}{(B) Off-policy}\\
        $K$ & High res & Low res & High res & Low res\\
        \specialrule{.1em}{.05em}{.05em} 
        $5000$ & $64.6\pm 2.0$ & $63.5\pm 1.9$ & $60.4\pm 2.8$ & $60.0\pm 3.3$\\
        \hline
        $10000$ & $66.0\pm 1.3$ & $66.5\pm 1.7$ & $67.0\pm 1.8$ & $68.0\pm 2.3$\\
        \hline
        $20000$ & $65.1\pm 1.0$ & $65.1\pm 1.2$ & $65.0\pm 1.6$ & $65.1\pm 2.0$\\
    \specialrule{.1em}{.05em}{.05em} 
    \end{tabular}
\end{center}
\label{fig:error-plot}
\end{table}

We consider two settings with different visual resolutions (see Appendix \ref{sec:appendix-exp} for details): one in high resolution (dimension $3\times 40\times 150$) and the other in low resolution (dimension $3\times 20\times 75$). We use a policy trained for $200$ iterations with REINFORCE as the target policy. We conduct this experiment in two cases: (A) the data are generated from the target policy itself; (B) the data are generated from a mixture policy of $0.8$ target policy and $0.2$ uniform distribution. (A) aims to verify the performance's dependence on data intrinsic dimension without the influence from distribution shift. 

We observe that the performance of FQE on high-resolution data and low-resolution data is similar, in both the off-policy case and the easier case with no distribution shift. It shows that the estimation error of FQE takes little influence from the representation dimension of the data but rather from the intrinsic structure of the CartPole environment, which is the same regardless of data resolution. We also observe that the estimation becomes increasingly accurate as sample size $K$ increases. These empirical results confirm our upper bound in Theorem \ref{thm:FQE}, which is only dominated by data intrinsic dimension.


\section{Conclusion}

This paper studies nonparametric off-policy evaluation in MDPs. We use CNNs to approximate Q-functions. Our theory proves that when state-action space exhibits low-dimensional structures, the finite-sample estimation error of FQE converges depending on the intrinsic dimension. In the estimation error, the distribution mismatch between the data distribution and target policy's visitation distribution is quantified by a restricted $\chi^2$-divergence term, which is oftentimes much smaller than the absolute density ratio. We support our theory with CartPole experiments. For future directions, it would be of interest to adapt this low-dimensional analysis to time-homogeneous MDPs. It is nontrivial to preserve the error rate with sample reuse in the presence of temporal dependency. 

\bibliographystyle{ims}
\bibliography{ref}  
\newpage

\appendix


\allowdisplaybreaks

\section{Proof of Theorem \ref{thm:FQE}}\label{sec:appendix-proof-FQE}

In this section, we provide a proof for the upper bound on the estimation error in Theorem \ref{thm:FQE}. Recall that Assumption \ref{assumption:completeness} does not require Bellman completeness with respect to $\cF$; thus, the estimation error can be decomposed into a sum of statistical error and approximation error. A tradeoff exists about the network size: while a larger network reduces the approximation error, it leads to higher variance in the statistical error. Consequently, we choose the network size and architecture appropriately to balance the two types of error, which in turn minimizes the final estimation error.

\begin{proof}[Proof of Theorem \ref{thm:FQE}]
	The goal is to bound
	\begin{align*}
		\EE\left|\widehat{v}^\pi - v^\pi\right| &= \EE\left|\int_\cX\left(Q_1^\pi - \widehat{Q}_1^\pi\right)(s,a)\ud q_1^\pi(s,a)\right| \le \EE\left[\int_\cX\left|Q_1^\pi - \widehat{Q}_1^\pi\right|(s,a)\ud q_1^\pi(s,a)\right].
	\end{align*}
	
	To get an expression for that, we first expand it recursively. To illustrate the recursive relation, we examine the quantity at step $h$:
	\begin{align*}
		& \quad \EE\left[\int_\cX\left|Q_h^\pi - \widehat{Q}_h^\pi\right|(s,a)\ud q_{h}^\pi(s,a)\right]\\
		&= \EE\left[\int_\cX\left|\cT_h^\pi Q_{h+1}^\pi - \widehat{\cT}_h^\pi\left(\widehat{Q}_{h+1}^\pi\right)\right|(s,a)\ud q_{h}^\pi(s,a)\right]\\
		&\le \EE\left[\int_\cX\left|\cT_h^\pi Q_{h+1}^\pi - \cT_h^\pi \widehat{Q}_{h+1}^\pi\right|(s,a)\ud q_{h}^\pi(s,a)\right] + \EE\left[\int_\cX\left|\cT_h^\pi \widehat{Q}_{h+1}^\pi - \widehat{\cT}_h^\pi\left(\widehat{Q}_{h+1}^\pi\right)\right|(s,a)\ud q_{h}^\pi(s,a)\right]\\
		&= \EE\left[\int_\cX\left|Q_{h+1}^\pi - \widehat{Q}_{h+1}^\pi\right|(s,a)\ud q_{h+1}^\pi(s,a)\right]\\ &\quad + \EE\left[\EE\left[\int_\cX\left|\cT_h^\pi \widehat{Q}_{h+1}^\pi - \widehat{\cT}_h^\pi\left(\widehat{Q}_{h+1}^\pi\right)\right|(s,a)\ud q_{h}^\pi(s,a) ~\vert~ \cD_{h+1},\cdots,\cD_H\right]\right]\\
		&\stackrel{(a)}{\le} \EE\left[\int_\cX\left|Q_{h+1}^\pi - \widehat{Q}_{h+1}^\pi\right|(s,a)\ud q_{h+1}^\pi(s,a)\right]\\ & \quad + \EE\left[\EE\left[\sqrt{\int_\cX\left(\cT_h^\pi \widehat{Q}_{h+1}^\pi - \widehat{\cT}_h^\pi\left(\widehat{Q}_{h+1}^\pi\right)\right)^2(s,a)\ud q_h^{\pi_0}(s,a)}\sqrt{\chi_{\cQ}^2(q_{h}^{\pi}, q_h^{\pi_0}) + 1} ~\vert~ \cD_{h+1},\cdots,\cD_H\right]\right]\\
		&\stackrel{(b)}{\le} \EE\left[\int_\cX\left|Q_{h+1}^\pi - \widehat{Q}_{h+1}^\pi\right|(s,a)\ud q_{h+1}^\pi(s,a)\right]\\ & \quad + \sqrt{\EE\left[\EE\left[\int_\cX\left(\cT_h^\pi \widehat{Q}_{h+1}^\pi - \widehat{\cT}_h^\pi\left(\widehat{Q}_{h+1}^\pi\right)\right)^2(s,a)\ud q_h^{\pi_0}(s,a)~\vert~ \cD_{h+1},\cdots,\cD_H\right]\right]}\sqrt{\chi_{\cQ}^2(q_{h}^\pi, q_h^{\pi_0}) + 1}\\
		&\stackrel{(c)}{\le} \int_\cX\left|Q_{h+1}^\pi - \widehat{Q}_{h+1}^\pi\right|(s,a)\ud q_{h+1}^\pi(s,a) + \sqrt{C'(5H^2)K^{-\frac{2\alpha}{2\alpha+d}}\log^5 K}\sqrt{\chi_{\cQ}^2(q_{h}^\pi, q_h^{\pi_0}) + 1}\\
		&\le \int_\cX\left|Q_{h+1}^\pi - \widehat{Q}_{h+1}^\pi\right|(s,a)\ud q_{h+1}^\pi(s,a) + CHK^{-\frac{\alpha}{2\alpha+d}}\log^{5/2}K\sqrt{\chi_{\cQ}^2(q_{h}^\pi, q_h^{\pi_0}) + 1},
	\end{align*}
	where $C$ denotes a (varying) constant depending on $D^{\frac{3\alpha}{2\alpha + d}}$, $d$, $\alpha$, $\frac{d}{\alpha p-d}$, $p$, $q$, $c_0$, $B$, $\omega$ and the surface area of $\cX$.
	
	In (a), note $\cT_h^\pi \widehat{Q}_{h+1}^\pi \in \Bnorm(\cX)$ by Assumption \ref{assumption:completeness} and $-\widehat{\cT}_h^\pi\left(\widehat{Q}_{h+1}^\pi\right) \in \cF$ by our algorithm, so $\cT_h^\pi \widehat{Q}_{h+1}^\pi - \widehat{\cT}_h^\pi\left(\widehat{Q}_{h+1}^\pi\right)\in\cQ$. Then we obtain this inequality by invoking the following lemma. 
	
	\begin{lemma}\label{lem:distribution-shift}
		Given a function class $\cQ$ that contains functions mapping from $\cX$ to $\RR$ and two probability distributions $q_1$ and $q_2$ supported on $\cX$, for any $g\in\cQ$,
		\begin{equation*}
			\EE_{x\sim q_1}\left[g(x)\right] \le \sqrt{\EE_{x\sim q_2}[g^2(x)](1+\chi^2_{\cQ} (q_1,q_2))}.
		\end{equation*}
	\end{lemma}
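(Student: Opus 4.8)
The plan is to read the inequality straight off the definition of the restricted $\chi^2$-divergence; beyond unwinding notation there is essentially nothing to do. Rewriting the definition of $\chi^2_{\cQ}$,
$$1 + \chi^2_{\cQ}(q_1, q_2) = \sup_{f \in \cQ} \frac{\EE_{x\sim q_1}[f(x)]^2}{\EE_{x\sim q_2}[f^2(x)]},$$
with the convention $0/0 = 0$ handling any degenerate ratios.

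First I would fix an arbitrary $g \in \cQ$ and note that, since $g$ is one of the functions competing in the supremum above,
$$\frac{\EE_{x\sim q_1}[g(x)]^2}{\EE_{x\sim q_2}[g^2(x)]} \;\le\; 1 + \chi^2_{\cQ}(q_1, q_2).$$
Next I would multiply through by $\EE_{x\sim q_2}[g^2(x)] \ge 0$ to obtain $\EE_{x\sim q_1}[g(x)]^2 \le \EE_{x\sim q_2}[g^2(x)]\bigl(1 + \chi^2_{\cQ}(q_1, q_2)\bigr)$, take square roots, and use $\EE_{x\sim q_1}[g(x)] \le \bigl|\EE_{x\sim q_1}[g(x)]\bigr| = \sqrt{\EE_{x\sim q_1}[g(x)]^2}$ to finish. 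The one point that deserves a word is the degenerate case $\EE_{x\sim q_2}[g^2(x)] = 0$, in which $g = 0$ holds $q_2$-almost surely; under absolute continuity of $q_1$ with respect to $q_2$ (or simply the $0/0 = 0$ convention) the quantity $\EE_{x\sim q_1}[g(x)]$ is then also $0$, so both sides of the claimed bound vanish and it holds trivially.

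I do not expect any real obstacle. It is, however, worth flagging why one argues directly from the definition rather than through Cauchy--Schwarz: writing $\EE_{x\sim q_1}[g(x)] = \EE_{x\sim q_2}\!\bigl[g(x) \cdot \tfrac{\ud q_1}{\ud q_2}(x)\bigr]$ and applying Cauchy--Schwarz yields only the weaker bound featuring the full Pearson divergence $\chi^2(q_1,q_2) \ge \chi^2_{\cQ}(q_1,q_2)$, which would not produce the $\kappa$ appearing in Theorem \ref{thm:inhomo-iid}. The variational form of $\chi^2_{\cQ}$ is precisely what lets the restricted quantity be used in place of the full one.
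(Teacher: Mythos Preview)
Your proposal is correct and follows essentially the same route as the paper's own proof: both read the inequality directly off the definition of $\chi^2_{\cQ}$ by noting that $g$ is one of the competitors in the supremum. Your version is in fact slightly more careful, since you explicitly handle the sign of $\EE_{x\sim q_1}[g(x)]$ and the degenerate case $\EE_{x\sim q_2}[g^2(x)]=0$, neither of which the paper spells out.
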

	\begin{proof}[Proof of Lemma \ref{lem:distribution-shift}]
		\begin{align*}
			\EE_{x\sim q_1}\left[g(x)\right]
			&= \sqrt{\EE_{x\sim q_2}[g^2(x)]\frac{\EE_{x\sim q_1}[g(x)]^2}{\EE_{x\sim q_2}[g^2(x)]}}\\
			&\le \sqrt{\EE_{x\sim q_2}[g^2(x)]\sup_{f\in\cQ}\frac{\EE_{x\sim q_1}[f(x)]^2}{\EE_{x\sim q_2}[f^2(x)]}}\\
			&= \sqrt{\EE_{x\sim q_2}[g^2(x)](1+\chi^2_{\cQ} (q_1,q_2))},
		\end{align*}
		where the last step is by the definition of $\chi^2_{\cQ} (q_1,q_2) := \sup_{f \in \cQ} \frac{\EE_{q_1}[f]^2}{\EE_{q_2}[f^2]} - 1$.
	\end{proof}
	
	In (b), we use Jensen's inequality and the fact that square root is concave. 
	
	To obtain (c), we invoke Lemma \ref{thm:stat}, which provides an upper bound on the error of nonparametric regression at each step of the FQE algorithm.
	
	Specifically, we will invoke Lemma \ref{thm:stat} when conditioning on $\cD_{h+1}, \cdots, \cD_H$, i.e. the data from time step $h+1$ to time step $H$. Note that after conditioning, $\cT_h^\pi \widehat{Q}_{h+1}^\pi$ becomes measurable and deterministic with respect to $\cD_{h+1}, \cdots, \cD_H$. Also, $\cD_{h+1}, \cdots, \cD_H$ are independent from $\cD_{h}$, which we use in the regression at step $h$.
	
	To justify our use of this theorem, we need to cast our problem into a regression problem described in the theorem. Since $\{(s_{h,k}, a_{h,k})\}_{k=1}^K$ are i.i.d. from $q_h^{\pi_0}$, we can view them as the samples $x_i$'s in the lemma. We can view $\cT_h^\pi\widehat{Q}_{h+1}^\pi$, which is measurable under our conditioning, as $f_0$ in the lemma. Furthermore, we let
	\begin{equation*}
		\zeta_{h,k} := r_{h,k} + \int_\cA \widehat{Q}_{h+1}^\pi(s_{h,k}',a)\pi(a ~|~ s_{h,k}') \ud a - \cT_h^\pi \widehat{Q}_{h+1}^\pi(s_{h,k}, a_{h,k}).
	\end{equation*}
	In order to invoke Lemma \ref{thm:stat} under the conditioning on $\cD_{h+1}, \cdots, \cD_H$, we need to verify whether three conditions are satisfied (conditioning on $\cD_{h+1}, \cdots, \cD_H$):
	\begin{enumerate}
		\item Sample $\{(s_{h,k}, a_{h,k})\}_{k=1}^K$ are i.i.d;
		
		\item Sample $\{(s_{h,k}, a_{h,k})\}_{k=1}^K$ and noise $\{\zeta_{h,k}\}_{k=1}^K$ are uncorrelated;
		
		\item Noise $\{\zeta_{h,k}\}_{k=1}^K$ are independent, zero-mean, subgaussian random variables.
	\end{enumerate}
	In our setting, $\{(s_{h,k}, a_{h,k})\}_{k=1}^K$ are i.i.d. from $q_h^{\pi_0}$. Due to the time-inhomogeneous setting, they are independent from $\cD_{h+1}, \cdots, \cD_H$, so $\{(s_{h,k}, a_{h,k})\}_{k=1}^K$ are still i.i.d. under our conditioning. Thus, Condition 1 is clearly satisfied. 
	
	We may observe that under our conditioning, the transition from $(s_{h,k}, a_{h,k})$ to $s_{h,k}'$ is the only source of randomness in $\zeta_{h,k}$, besides $(s_{h,k}, a_{h,k})$ itself. The distribution of $(s_{h,k}, a_{h,k}, s_{h,k}')$ is actually the product distribution between $P_h(\cdot|s_{h,k}, a_{h,k})$ and $q_h^{\pi_0}$, so a function of $s_{h,k}'$, generated from the transition distribution $P_h(\cdot|s_{h,k}, a_{h,k})$, is uncorrelated with $(s_{h,k}, a_{h,k})$. Thus, $(s_{h,k}, a_{h,k})$'s are uncorrelated with $\zeta_{h,k}$'s under our conditioning, and Condition 2 is satisfied.
	
	Condition 3 can also be easily verified. Under our conditioning, the randomness in $\zeta_{h,k}$ only comes from $(s_{h,k}, a_{h,k}, s_{h,k}', r_{h,k})$, which are independent from $(s_{h,k'}, a_{h,k'}, s_{h,k'}', r_{h,k'})$ for any $k'\neq k$, so $\zeta_{h,k}$'s are independent from each other. As for the mean of $\zeta_{h,k}$,
	\begin{align*}
		& \quad \EE\left[\zeta_{h,k} ~|~ \cD_{h+1}, \cdots, \cD_H\right]\\
		&= \EE\left[r_{h,k} + \int_\cA \widehat{Q}_{h+1}^\pi(s_{h,k}',a)\pi(a ~|~ s_{h,k}') \ud a - r_h(s_{h,k}, a_{h,k}) - \cP_h^\pi \widehat{Q}_{h+1}^\pi(s_{h,k}, a_{h,k}) ~|~ \cD_{h+1}, \cdots, \cD_H\right]\\
		&= \EE\bigg[r_{h,k} - r_h(s_{h,k}, a_{h,k}) + \int_\cA \widehat{Q}_{h+1}^\pi(s_{h,k}',a)\pi(a ~|~ s_{h,k}') \ud a\\ & \quad - \EE_{s'\sim P_h(\cdot|s_{h,k}, a_{h,k})}\left[\int_\cA \widehat{Q}_{h+1}^\pi(s',a)\pi(a ~|~ s') \ud a ~|~ s_{h,k}, a_{h,k},\cD_{h+1}, \cdots, \cD_H\right] ~|~ \cD_{h+1}, \cdots, \cD_H\bigg]\\
		&= 0 + 0 = 0.
	\end{align*}
	
	On the other hand, $\norm{\widehat{Q}_{h+1}^\pi}_\infty \le H$ almost surely, because it is a function in our CNN class $\cF$. Thus, $\zeta_{h,k}$ is a bounded random variable with $\zeta_{h,k} \in [-2H, 2H]$ almost surely, so its variance is bounded by $4H^2$. Its boundedness also implies it is a subgaussian random variable. Thus, Condition 3 is also satisfied.
	
	Hence, Lemma \ref{thm:stat} proves, for step $h$ in our algorithm, 
	\begin{align*}
		&\quad \EE\left[\int_\cX\left(\cT_h^\pi \widehat{Q}_{h+1}^\pi - \widehat{\cT}_h^\pi\left(\widehat{Q}_{h+1}^\pi\right)\right)^2(s,a)\ud q_h^{\pi_0}(s,a)~\vert~ \cD_{h+1},\cdots,\cD_H\right]\\
		&\le C'(H^2 + 4H^2)K^{-\frac{2\alpha}{2\alpha+d}}\log^5 K,
	\end{align*}
	where $C'$ depends on $D^{\frac{6\alpha}{2\alpha + d}}$, $d$, $\alpha$, $\frac{2d}{\alpha p-d}$, $p$, $q$, $c_0$, $B$, $\omega$ and the surface area of $\cX$.
	
	Note that this upper bound holds for any $\widehat{Q}_{h+1}^\pi$ or $\cD_{h+1},\cdots,\cD_H$. The sole purpose of our conditioning is that we could view $\widehat{Q}_{h+1}^\pi$ as a measurable or deterministic function under the conditioning and then apply Lemma \ref{thm:stat}. Therefore, 
	\begin{align*}
		&\quad \EE\left[\EE\left[\int_\cX\left(\cT_h^\pi \widehat{Q}_{h+1}^\pi - \widehat{\cT}_h^\pi\left(\widehat{Q}_{h+1}^\pi\right)\right)^2(s,a)\ud q_h^{\pi_0}(s,a)~\vert~ \cD_{h+1},\cdots,\cD_H\right]\right]\\
		&\le C'(H^2 + 4H^2)K^{-\frac{2\alpha}{2\alpha+d}}\log^5 K.
	\end{align*}
	
	Finally, we carry out the recursion from time step $1$ to time step $H$, and the final result is
	\begin{align*}
		\EE\left|v^\pi - \widehat{v}^\pi\right| &\le CH^2K^{-\frac{\alpha}{2\alpha+d}}\log^{5/2}K\left(\frac{1}{H}\sum_{h=1}^H\sqrt{\chi_{\cQ}^2(q_{h}^\pi, q_h^{\pi_0}) + 1}\right).
	\end{align*}
\end{proof}

\section{Proof of Theorem \ref{thm:approx}}\label{sec:appendix-proof-approx}

First of all, let us define a class of single-block CNNs in the form of 
\begin{align*}
	f(x)=W\cdot\Conv_{\cW,\cB}(x)
\end{align*}
as
\begin{align}\label{eq:cnn_class}
	\cF^{\rm SCNN}(L,J,I,\tau_1,\tau_2)=&\big\{ f ~|~ f(x)\mbox{ in the form of (\ref{eq:cnnBlock}) with $L$ layers. The number of filters per block} \nonumber\\
	&\hspace{0.7cm} \mbox{is bounded by }L;   \mbox{ filter size is bounded by } I; \mbox{ the number of channels} \nonumber \\
	& \hspace{0.7cm} \mbox{is bounded by }J;  \max_{m,l}\|\cW_m^{(l)}\|_{\infty} \vee \|B_m^{(l)}\|_{\infty} \leq \tau_1, \|W\|_{\infty} \leq \tau_2\big\}.
\end{align}
We will refer to CNNs in this form as ``single-block CNNs'' and use them as building blocks of our final CNN approximation for the ground truth Besov function.

Now, we provide the proof details for Theorem \ref{thm:approx}, which quantifies the tradeoff between a CNN in the class of \ref{eq:cnn_class} and its approximation error for Besov functions on a low-dimensional manifold. We start from the decomposition of the approximation error of $\widehat{f}$, which is based on the decomposition of the approximation error of $\bar{f}_{i,j}$ in (\ref{eq:bfij}), and will proceed to the end of this proof.

\begin{lemma}\label{lem:approx-err-decomp}
	\textit{Let $\eta$ be the approximation error of the multiplication operator $\htimes(\cdot,\cdot)$ as defined in Step 3 of Appendix \ref{sec:thm-approx-overview} and Proposition \ref{prop:A1-bound}, $\delta$ be defined as in Step 3 of Appendix \ref{sec:thm-approx-overview} and Proposition \ref{prop:A2-bound}, $\Delta$ and $\theta$ be defined as in Step 3 of Appendix \ref{sec:thm-approx-overview} and Proposition \ref{prop:A3-bound}. Assume $N$ is chosen according to Proposition \ref{prop:A2-bound}. For any $i=1,...,C_{\cX}$, we have $\norm{\widehat{f}-f_0}_{L^\infty}\leq \sum_{i=1}^{C_{\cX}} (A_{i,1}+A_{i,2}+A_{i,3})$ with
	\begin{align*}
		&A_{i,1}=\sum_{j=1}^N\left\|\htimes(\widehat{f}_{i,j}^{\rm SCNN}\circ \widehat{\phi}_i,\moned\circ \hdi)-\widehat{f}_{i,j}^{\rm SCNN}\circ \widehat{\phi}_i\times (\moned\circ \hdi)\right\|_{L^\infty}\leq C''\delta^{-d/\alpha}\eta,\\
		&A_{i,2}=\left\|\left(\sum_{j=1}^N \left(\widehat{f}_{i,j}^{\rm SCNN}\circ \widehat{\phi}_i\right)\right)\times(\moned\circ \hdi)-f_i\times (\moned\circ \hdi)\right\|_{L^\infty}\leq \delta,\\
		&A_{i,3}=\|f_i\times(\moned\circ \hdi)-f_i\times \mone_{U_i}\|_{L^\infty}\leq \frac{c(\pi+1)}{\beta(1-\beta/\omega)}\Delta
	\end{align*}
	for some constant $C''$ depending on $d,\alpha,p,q$ and some constant $c$. 
	Furthermore, for any $\varepsilon\in(0,1)$, setting
	\begin{align}
		\delta=\frac{N^{-\alpha/d}}{3C_{\cX}}, \ \eta =\frac{1}{C''}\frac{N^{-1-\alpha/d}}{(3C_{\cX})^{d/\alpha}},\Delta=\frac{\beta(1-\beta/\omega)N^{-\alpha/d}}{3c(\pi+1)C_{\cX}}, \ \theta=\frac{\Delta}{16B^2D}
		\label{eq:error-param}
	\end{align} 
	gives rise to 
	$$
	\norm{\widehat{f}-f_0}_{L^\infty}\leq N^{-\frac{\alpha}{d}}.
	$$
	The choice in (\ref{eq:error-param}) satisfies the condition $\Delta> 8B^2D\theta$ in Proposition \ref{prop:A3-bound}.}
\end{lemma}

\begin{proof}[Proof of Lemma \ref{lem:approx-err-decomp}]
	As in Proposition \ref{prop:A1-bound}, $A_{i,1}$ measures the error from $\htimes$:
	$$
	A_{i,1}=\sum_{j=1}^N\left\|\htimes(\widehat{f}_{i,j}^{\rm SCNN}\circ \widehat{\phi}_i,\moned\circ \hdi)-\widehat{f}_{i,j}^{\rm SCNN}\circ \widehat{\phi}_i\times (\moned\circ \hdi)\right\|_{L^\infty}\leq N\eta\leq C''\delta^{-d/\alpha}\eta,
	$$
	for some constant $C''$  depending on $d,\alpha,p,q$. The last inequality is due to the choice of $N$ in Proposition \ref{prop:A2-bound}.
	
	$A_{i,2}$ measures the error from CNN approximation of Besov functions. As in Proposition \ref{prop:A2-bound},  $A_{i,2}\leq \delta$.
	
	$A_{i,3}$ measures the error from CNN approximation of the chart determination function. The bound of $A_{i,3}$ can be derived using the proof of Lemma 4.5 in \citet{chen2021nonparametric}, since $f_i\circ\phi_i^{-1}$ is a Lipschitz function and its domain is in $[0,1]^d$.
\end{proof}

In order to attain the error desired in Lemma \ref{lem:approx-err-decomp}, we need each network in $\bar{f}_{i,j}$ with appropriate size. The network size of the components in $\bar{f}_{i,j}$ can be analyzed as follows:
\begin{itemize}
	\item $\widehat{\mone}_i$: The chart determination network $\widehat{\mone}_i = \hdi \circ \widehat{\mone}_{\Delta}$ is the composition of $\hdi$ and $\widehat{\mone}_{\Delta}$. By Proposition \ref{prop:A3-bound}, $\hdi$ is a single-block CNN with $O(\log \frac{1}{\theta})=O(\frac{\alpha}{d}\log N+D+\log D)$ layers and width $6D$; $\widehat{\mone}_{\Delta}$ is a single-block CNN with $O(\log(\beta^2/\Delta)) =O(\frac{\alpha}{d}\log N)$ layers and width $2$. In both subnetworks, all parameters are of $O(1)$. By Lemma \ref{lem:cnn-composition}, the chart determination network $\widehat{\mone}_i$ is a single-block CNN with $O(\frac{\alpha}{d}\log N+D+\log D)$ layers, width $6D+2$ and all weight parameters are of $O(1)$.
	\item $\widehat{\times}$: By Proposition \ref{prop:A1-bound}, the multiplication network is a single-block CNN with $O(\log \frac{1}{\eta})=O((1+\frac{\alpha}{d})\log N)$ layers and $O(1)$ width. All weight parameters are bounded by $(c_0^2 \vee 1)$.
	\item $\widehat{\phi}_i$: The projection $\phi_i$ is a linear one, so it can be expressed with a single-layer perceptron. By Lemma 8 in \citet{pmlr-v139-liu21e}, this single-layer perceptron can be expressed with a single-block CNN with $2+D$ layers and width $d$. All parameters are of $O(1)$.
	\item $\widehat{f}_{i,j}^{\rm SCNN}$: by Proposition \ref{prop:A2-bound}, each $\widehat{f}_{i,j}^{\rm SCNN}$ is a single-block CNN with $O(\log \frac{1}{\delta})=O(\frac{\alpha}{d}\log N)$ layers and $\lceil 24d(\alpha+1)(\alpha+3)+8d\rceil$ channels. All weight parameters are in the order of $O\left(\delta^{-(\log 2)(\frac{2d}{\alpha p-d}+c_1d^{-1})}\right) = O\left(N^{(\log 2)\frac{\alpha}{d}(\frac{2d}{\alpha p-d}+c_1d^{-1})}\right)$.
\end{itemize}

Next, we want to show $\bar{f}_{i,j}$, a composition of the aforementioned single-block CNNs, can be simply expressed as a single-block CNN.

By Lemma \ref{lem:cnn-composition}, there exists a single-block CNN $g_{i,j}$ with $O(\log N + D)$ layers and $\lceil24d(\alpha+1)(\alpha+3)+9d\rceil$ width realizing $\widehat{f}_{i,j}^{\rm SCNN}\circ \widehat{\phi}_i$. All weight parameters in $g_{i,j}$ are in the order of $O\left(N^{(\log 2)\frac{\alpha}{d}(\frac{2d}{\alpha p-d}+c_1d^{-1})}\right)$. Moreover, recall that the chart determination network $\widehat{\mone}_i$ is a single-block CNN with $O(\log N + D + \log D)$ layers and width $6D+2$, whose weight parameters are of $O(1)$. By Lemma 14 in \citet{pmlr-v139-liu21e}, one can construct a convolutional block, denoted by $\bar{g}_{i,j}$, such that 
	\begin{align}
		\bar{g}_{i,j}(x)=\begin{bmatrix}
			(g_{i,j}(x))_+ & (g_{i,j}(x))_- & (\widehat{\mone}_i(x))_+ & (\widehat{\mone}_i(x))_-\\
			\star & \star & \star &\star
		\end{bmatrix}.
		\label{eq:g-stack}
	\end{align}
	Here $\bar{g}_{i,j}$ has $\lceil 24d(\alpha+1)(\alpha+3)+9d\rceil+6D+2$ channels.
	
	Since the input of $\htimes$ is $\begin{bmatrix}
		g_{i,j}\\
		\widehat{\mone}_i
	\end{bmatrix}$, by Lemma 15 in \citet{pmlr-v139-liu21e}, there exists a CNN $\mathring{g}_{i,j}$ which takes (\ref{eq:g-stack}) as the input and outputs $\htimes(g_{i,j},\widehat{\mone}_i)$. 
	
	Note that $\bar{g}_{i,j}$ only contains convolutional layers. The composition $\mathring{g}_{i,j}\circ\bar{g}_{i,j}$, denoted by $\widehat{g}^{\rm SCNN}_{i,j}$, is a CNN and for any $x\in\cX$, $\widehat{g}^{\rm SCNN}_{i,j}(x)=\bar{f}_{i,j}(x)$. We have $\widehat{g}^{\rm SCNN}_{i,j}\in \cF^{\rm SCNN}(L,J,I,\tau,\tau)$ with
	\begin{gather}
		L=O\left(\log N+D+\log D\right),\ J=\lceil 48d(\alpha+1)(\alpha+3)+18d\rceil+12D+O(1),\nonumber\\ \tau=O\left(N^{(\log2)\frac{d}{\alpha}(\frac{2d}{\alpha p-d}+c_1d^{-1})}\right),
	\end{gather}
	and $I$ can be any integer in $[2,D]$.
	
	Therefore, we have shown that $\widehat{g}^{\rm SCNN}_{i,j}$ is a single-block CNN that expresses $\bar{f}_{i,j}$, as we desired.
	
	Furthermore, recall that $\widehat{f}$ can be written as a sum of $C_\cX N$ such SCNNs. By Lemma \ref{lem:CNN-adap}, for any $\widetilde{M},\widetilde{J}$ satisfying $\widetilde{M}\widetilde{J}=O(N)$, there exists a CNN architecture $\cF^{\rm SCNN}(L,J,I,\tau,\tau)$ that gives rise to a set of single-block CNNs $\{\widehat{g}_i\}_{i=1}^{\widetilde{M}}\in\cF^{\rm SCNN}(L,J,I,\tau,\tau)$ with 
\begin{align}\label{eq:CNN-sum}
	\widehat{f}=\sum_{i=1}^{\widetilde{M}} \widehat{g}_i
\end{align}
and 
\begin{gather}
		L=O\left(\log N+D+\log D\right),\ J=O(D\widetilde{J}),\ \tau=O\left(N^{(\log2)\frac{d}{\alpha}(\frac{2d}{\alpha p-d}+c_1d^{-1})}\right).
	\end{gather}

By Lemma \ref{lem:cnn-kappa-tradeoff} below, we slightly adjust the CNN architecture by re-balancing the weight parameter boundary of the convolutional blocks and that of the final fully connected layer. In particular, we rescale all parameters in convolutional layers of $\widehat{g}_{i}$ to be no larger than 1. While this procedure does not change the approximation power of the CNN, it can make the CNN have a smaller covering number, which is conducive to a smaller variance.

\begin{lemma}[Lemma 16 in \citet{pmlr-v139-liu21e}]\label{lem:cnn-kappa-tradeoff}
	\textit{Let $\gamma\ge 1$. For any $g \in \cF^{\rm SCNN}(L,J,I,\tau_1,\tau_2)$, there exists $f \in \cF^{\rm SCNN}(L,J,I,\gamma^{-1}\tau,\gamma^L\tau)$ such that $g(x) = f(x)$. 
	}
\end{lemma}

    In this case, we set $\gamma=c'N^{(\log 2)\frac{d}{\alpha}(\frac{2d}{\alpha p-d}+c_1d^{-1})}(8ID)\widetilde{M}^{\frac{1}{L}}$, where $c'$ is a constant such that $\tau\leq c'N^{(\log2)\frac{d}{\alpha}(\frac{2d}{\alpha p-d}+c_1d^{-1})}$. With this $\gamma$, we have $\widehat{f}_{i}\in \cF^{\rm SCNN}(L,J,I,\tau_1,\tau_2) $ with
	\begin{gather*}
		L=O(\log N+D+\log D),\  J=O(D),\ \tau_1=(8ID)^{-1}\widetilde{M}^{-\frac{1}{L}}=O(1),\\ \log \tau_2=O\left(\log \widetilde{M} + \log^2 N + D\log N\right).
	\end{gather*}

Finally, we prove that it suffices to use one CNN to realize the sum of single-block CNNs in (\ref{eq:CNN-sum}).

\begin{lemma}\label{lem:cnn-to-convnet}
	\textit{Let $\cF^{\rm SCNN}(L,J,I,\tau_1,\tau_2)$ be any CNN architecture from $\RR^D$ to $\RR$. Assume the weight matrix in the fully connected layer of $\cF^{\rm SCNN}(L,J,I,\tau_1,\tau_2)$ has nonzero entries only in the first row. For any positive integer $M$, there exists a CNN architecture $\cF(M,L,J,I,\tau_1, \tau_2(1\vee \tau_1^{-1}))$ such that for any $\{\widehat{f}_i(x)\}_{i=1}^M\subset\cF^{\rm SCNN}(L,J,I,\tau_1,\tau_2) $, there exists $\widehat{f}\in \cF(M,L,4+J,I,\tau_1, \tau_2(1\vee \tau_1^{-1}))$ with
	$$
	\widehat{f}(x)=\sum_{m=1}^M \widehat{f}_m(x).
	$$}
\end{lemma}

Consequently, by Lemma \ref{lem:cnn-to-convnet}, there exists a CNN that can express our sum of $\widetilde{M}$ single-block CNNs with architecture  $\cF(M,L,J,I,\tau_1,\tau_2)$ with
    \begin{gather}
        L=O(\log N+D+\log D), \ J=O(D\widetilde{J}), \ \tau_1=(8ID)^{-1}\widetilde{M}^{-\frac{1}{L}}=O(1),\nonumber\\ \log \tau_2=O\left(\log \widetilde{M} + \log^2 N + D\log N\right), \ M=O(\widetilde{M}).
        \label{eq:appendix-nn-param-final}
    \end{gather}
    and $\widetilde{J},\widetilde{M}$ satisfying
    \begin{align}\label{eq:appendix-MJ}
    	\widetilde{M}\widetilde{J}=O(N),
    \end{align}
    which is a requirement inherited from Lemma \ref{lem:CNN-adap}. This CNN is our final approximation for $f_0$.
    
Applying this relation $N=O(\widetilde{M}\widetilde{J})$ to (\ref{eq:appendix-nn-param-final}) gives 
\begin{align}
	\norm{\widehat{f}-f_0}_{L^\infty} \le (\widetilde{M}\widetilde{J})^{-\frac{\alpha}{d}}
\end{align}
and the network size
\begin{gather*}
	L=O\left(\log (\widetilde{M}\widetilde{J})+D+\log D\right),\ J=O(D\widetilde{J}),\ \tau_1=(8ID)^{-1}\widetilde{M}^{-\frac{1}{L}}=O(1),\\ \log \tau_2=O\left(\log^2 \widetilde{M}\widetilde{J} + D\log \widetilde{M}\widetilde{J}\right), \ M=O(\widetilde{M}).
\end{gather*}

\subsection{Proof of Lemma \ref{lem:cnn-to-convnet}}
	Denote the architecture of $\widehat{f}_m$ with
	$$
	\widehat{f}_m(x)=W_m\cdot \Conv_{\cW_m,\cB_m}(x),
	$$
	where $\cW_m=\left\{\cW_m^{(l)}\right\}_{l=1}^L, \cB_m=\left\{B_m^{(l)}\right\}_{l=1}^L.$ Furthermore, denote the weight matrix and bias in the fully connected layer of $\widehat{f}$ with $\widehat{W},\widehat{b}$ and the set of filters and biases in the $m$-th block of $\widehat{f}$ with $\widehat{\cW}_m$ and $\widehat{\cB}_m$, respectively. 
	The padding layer $\widehat{P}$ in $\widehat{f}$ pads the input $x$ from $\RR^D$ to $\RR^{D\times 4}$ with zeros. Each column denotes a channel. 
	
	Let us first show that for each $m$, there exists some $\Conv_{\widehat{\cW}_m,\widehat{\cB}_m}: \RR^{D\times 4}\rightarrow \RR^{D\times 4}$ such that for any $Z\in\RR^{D\times 4} $ with the form
	\begin{align}
		Z=\begin{bmatrix}
			(x)_+ & (x)_- & \star & \star
		\end{bmatrix},
		\label{eq.Zpad}
	\end{align}
	where $(x)_+$ means applying $(\cdot \vee 0)$ to every entry of $x$ and $(x)_-$ means applying $-(\cdot \wedge 0)$ to every entry of $x$, so all entries in $Z$ are non-negative. We have
	\begin{align}
		\Conv_{\widehat{\cW}_m,\widehat{\cB}_m}(Z)=\begin{bmatrix}
			& & \frac{\tau_1}{\tau_2}(f_m(\bx) \vee 0) & -\frac{\tau_1}{\tau_2}(f_m(\bx) \wedge 0)\\
			\mathbf{0} & \mathbf{0} & \star & \star\\
			& & \vdots & \vdots\\
			& & \star & \star
		\end{bmatrix} + Z
		\label{eq.padZ}
	\end{align}
	where $\star$'s denotes entries that do not affect this result and may take any different value.
	
	For any $m$, the first layer of $f_m$ takes input in $\RR^D$. Thus, the filters in $\cW_m^{(1)}$ are in $\RR^D$. Again, we pad these filters with zeros to get filters in $\RR^{D\times 4}$ and construct $\widehat{\cW}_m^{(1)}$ such that
	\begin{align*}
		&(\widehat{\cW}_m^{(1)})_{1,:,:} =\begin{bmatrix}
			\mathbf{e}_1 & \mathbf{0} & \mathbf{0} & \mathbf{0}
		\end{bmatrix}, \\
		&(\widehat{\cW}_m^{(1)})_{2,:,:} =\begin{bmatrix}
			\mathbf{0} & \mathbf{e}_1 & \mathbf{0} & \mathbf{0}
		\end{bmatrix}, \\
		&(\widehat{\cW}_m^{(1)})_{3,:,:} =\begin{bmatrix}
			\mathbf{0} & \mathbf{0} & \mathbf{e}_1 & \mathbf{0}
		\end{bmatrix}, \\
		&(\widehat{\cW}_m^{(1)})_{4,:,:} =\begin{bmatrix}
			\mathbf{0} & \mathbf{0} & \mathbf{0} & \mathbf{e}_1 
		\end{bmatrix}, \\
		&(\widehat{\cW}_m^{(1)})_{4+j,:,:} =\begin{bmatrix}
			(\cW_m^{(1)})_{j,:,:} & (-\cW_m^{(1)})_{j,:,:} & \mathbf{0} & \mathbf{0}
		\end{bmatrix},
	\end{align*}
	where we use the fact that $\cW_m^{(1)}*(x)_+ - \cW_m^{(1)}*(x)_-=\cW_m^{(1)}*x$. The first four output channels at the end of this first layer is a copy of $Z$. For the filters in later layers of $\widehat{f}_m$ and all biases, we simply set
	\begin{align*}
		&(\widehat{\cW}_m^{(l)})_{1,:,:} =\begin{bmatrix}
			\mathbf{e}_1 & \mathbf{0} & \mathbf{0} & \mathbf{0} & \cdots & \mathbf{0}
		\end{bmatrix} &\mbox{ for } l=2,\dots,L,\\
		&(\widehat{\cW}_m^{(l)})_{2,:,:} =\begin{bmatrix}
			\mathbf{0} & \mathbf{e}_1 & \mathbf{0} & \mathbf{0} & \cdots & \mathbf{0}
		\end{bmatrix} &\mbox{ for } l=2,\dots,L,\\
		&(\widehat{\cW}_m^{(l)})_{3,:,:} =\begin{bmatrix}
			\mathbf{0} & \mathbf{0} & \mathbf{e}_1 & \mathbf{0} & \cdots & \mathbf{0}
		\end{bmatrix} &\mbox{ for } l=2,\dots,L-1,\\
		&(\widehat{\cW}_m^{(l)})_{4,:,:} =\begin{bmatrix}
			\mathbf{0} & \mathbf{0} & \mathbf{0} & \mathbf{e}_1 & \cdots & \mathbf{0}
		\end{bmatrix} &\mbox{ for } l=2,\dots,L-1,\\
		&(\widehat{\cW}_m^{(l)})_{4+j,:,:}= \begin{bmatrix}
			\mathbf{0} & \mathbf{0} & \mathbf{0} & \mathbf{0} & (\cW_m^{(l)})_{j,:,:}
		\end{bmatrix} & \mbox{ for } l=2,\dots,L-1,\\
		&(\widehat{\cB}_m^{(l)})_{j,:,:} =\begin{bmatrix}
			\mathbf{0} & \mathbf{0} & \mathbf{0} & \mathbf{0} & (\cB_m^{(l)})_{j,:,:}
		\end{bmatrix} &\mbox{ for } l=1,\dots,L-1.
	\end{align*}
	In $\Conv_{\widehat{\cW}_m,\widehat{\cB}_m}$, an additional convolutional layer is constructed to realize the fully connected layer in $\widehat{f_m}$. By our assumption, only the first row of $W_m$ is nonzero. Furthermore, we set $\widehat{\cB}_m^{(L)}=\mathbf{0}$ and $\widehat{\cW}_m^{L}$ as size-one filters with three output channels in the form of
	\begin{align*}
		&(\widehat{\cW}_m^{(L)})_{3,:,:} =\begin{bmatrix}
			\mathbf{0} & \mathbf{0} & \mathbf{e}_1 & \mathbf{0} & \frac{\tau_1}{\tau_2}(W_m)_{1,:}
		\end{bmatrix}, \\
		&(\widehat{\cW}_m^{(L)})_{4,:,:} =\begin{bmatrix}
			\mathbf{0} & \mathbf{0} & \mathbf{0} & \mathbf{e}_1 & -\frac{\tau_1}{\tau_2}(W_m)_{1,:}
		\end{bmatrix}.
	\end{align*}
	Under such choices, (\ref{eq.padZ}) is proved and all parameters in $\widehat{\cW}_m,\widehat{\cB}_m$ are bounded by $\tau_1$.
	
	By composing all convolutional blocks, we have
	\begin{align*}
		&(\Conv_{\widehat{\cW}_M,\widehat{\cB}_M})\circ \cdots \circ (\Conv_{\widehat{\cW}_1,\widehat{\cB}_1})\circ P(x)
		= \begin{bmatrix}
			& & \frac{\tau_1}{\tau_2}\sum_{m=1}^M (\widehat{f}_m \vee 0) &  -\frac{\tau_1}{\tau_2}\sum_{m=1}^M (\widehat{f}_m \wedge 0)\\
			(x)_+ & (x)_- & \star & \star\\
			& & \vdots & \vdots\\
			& & \star & \star
		\end{bmatrix}.
	\end{align*}
	Lastly, the fully connect layer can be set as
	$$
	\widetilde{W}=\begin{bmatrix}
		0 & 0 & \frac{\tau_2}{\tau_1} & -\frac{\tau_2}{\tau_1}\\
		\mathbf{0} & \mathbf{0} & \mathbf{0} & \mathbf{0}
	\end{bmatrix}, \ \widetilde{b}=0.
	$$
	Note that the weights in the fully connected layer are bounded by $\tau_2(1\vee \tau_1^{-1})$.
	
	The above construction gives
	\begin{align*}
		\widehat{f}(x)=\sum_{m=1}^M (\widehat{f}_m(x) \vee 0)+\sum_{m=1}^M (\widehat{f}_m(x) \wedge 0) =\sum_{m=1}^M \widehat{f}_m(x).
	\end{align*}

\subsection{Supporting Lemmae for Theorem \ref{thm:approx}}

Before stating Lemma \ref{lem:spline-approx-besov}, we provide a brief definition of cardinal B-splines.
\begin{definition}[Cardinal B-spline]
	\textit{Let $\psi(x)=\mone_{[0,1]}(x)$ be the indicator function for membership in $[0,1]$. The \underline{cardinal B-spline of order $m$} is defined by taking $m+1$-times convolution of $\psi$:
	\begin{align*}
		\psi_m(x)=(\underbrace{\psi\ast\psi\ast\cdots \ast\psi}_{m+1\mbox{ times}})(x)
	\end{align*}
	where $f\ast g(x)\equiv\int f(x-t)g(t)dt$.}
\end{definition}
Note that $\psi_m$ is a piecewise polynomial with degree $m$ and support $[0,m+1]$. It can be expressed as \citep{mhaskar1992approximation}
\begin{align*}
	\psi_m(x)=\frac{1}{m!}\sum_{j=0}^{m+1}(-1)^j \binom{m+1}{j}(x-j)_+^m.
\end{align*}
For any $k,j\in \NN$, let $\widetilde{g}_{k,j,m}(x)=\psi_m(2^kx-j)$, which is the rescaled and shifted cardinal B-spline with resolution $2^{-k}$ and support $2^{-k}[j,j+(m+1)]$. For $\kb=(k_1,\dots,k_d)\in \NN^d$ and $\jb=(j_1,\dots,j_d)\in \NN^d$, we define the $d$ dimensional cardinal B-spline as $\widetilde{g}_{\kb,\jb,m}^d(x)=\prod_{i=1}^d \psi_m(2^{k_i}x_i -j_i)$. When $k_1=\ldots=k_d=k \in \NN$, we denote $\widetilde{g}_{k,\jb,m}^d(x)=\prod_{i=1}^d \psi_m(2^{k}x_i -j_i)$. 

\subsubsection{Approximating Besov functions with Cardinal B-Splines}

For any $m\in \NN$, let $J(k)=\{-m,-m+1,\dots,2^k-1,2^k\}^d$ and the quasi-norm of the coefficient $\{c_{k,j}\}$ for $k\in \NN, \jb\in J(k)$ be
\begin{align}
	\|\{c_{k,\jb}\}\|_{\bnorm}=\left( \sum_{k\in\NN} \left[ 2^{k(\alpha-d/p)}\left( \sum_{\jb\in J(k)} |c_{k,\jb}|^p\right)^{1/p} \right]^q \right)^{1/q}.
	\label{eq.bnorm}
\end{align}
We can state the following lemma, from \citet{DeVore1988InterpolationOB,Dung2011OptimalAS}, which provides an upper bound on the error of using cardinal B-splines to approximate functions in $\Bnorm([0,1]^d)$.

\begin{lemma}[Lemma 2 in \citet{suzuki2018adaptivity}; \citet{DeVore1988InterpolationOB,Dung2011OptimalAS}]\label{lem:spline-approx-besov}
	Assume that $0 < p,q,r \le\infty$ and $0<\alpha<\infty$ satisfying $\alpha>d(1/p-1/r)_+$. Let $m \in \NN$ be the order of the cardinal B-spline basis such that $0<\alpha<\min(m,m-1+1/p)$. For any $f\in \Bnorm([0,1]^d)$, there exists $f_N$ satisfying
	\begin{align*}
		\|f-f_N\|_{L^r([0,1]^d)}\leq C N^{-\alpha/d}\|f\|_{\Bnorm([0,1]^d)}
	\end{align*}
	for some constant $C$ with $N\gg 1$. $f$ is in the form of
	\begin{align}\label{eq.fN}
		f_N(x)=\sum_{k=0}^H \sum_{\jb\in J(k)} c_{k,\jb}\widetilde{g}_{k,\jb,m}^d(x)+ \sum_{k=K+1}^{H^*}\sum_{i=1}^{n_k} c_{k,\jb_i} \widetilde{g}_{k,\jb_i,m}^d(x),
	\end{align}
	where $\{\jb_i\}_{i=1}^{n_k} \subset J(k), H=\lceil c_1\log(N)/d \rceil, H^*=\lceil \nu^{-1}\log(\lambda N) \rceil +H+1, n_k=\lceil \lambda N2^{-\nu (k-H)} \rceil$ for $k=H+1,\dots,H^*, u =d(1/p-1/r)_+$ and $\nu=(\alpha-u)/(2u)$. The real numbers $c_1>0$ and $\lambda>0$ are two absolute constants chosen to satisfy $\sum_{k=1}^H (2^k+m)^d+\sum_{k=H+1}^{H^*} n_k\leq N$, which are to $N$. Moreover, we can choose the coefficients $\{c_{k,\jb}\}$ such that
	\begin{align*}
		\|\{c_{k,\jb}\}\|_{\bnorm} \leq C_1 \|f\|_{\Bnorm([0,1]^d)}
	\end{align*}
	for some constant $C_1$.
\end{lemma}

\subsubsection{Approximating Cardinal B-Splines and Others with Single-Block CNNs}

The following Proposition \ref{prop:A1-bound} quantifies the tradeoff between the size of a single-block CNN and its approximation error for the multiplication operator.

\begin{proposition}\label{prop:A1-bound}
	\textit{Let $\times$ be defined as in (\ref{eq:f_0-decomp}). For any $\eta\in (0,1)$, there exists a single-block CNN $\htimes(\cdot,\cdot)$ such that 
	\begin{align*}
		\norm{a \times b - \htimes(a,b)}_{L^\infty} \le \eta,
	\end{align*}
	where $a,b$ are functions uniformly bounded by $c_0$.}
	
	\textit{$\htimes$ is a single-block CNN approximation of $\times$ and is in $\cF^{\rm SCNN}(L,J,I,\tau,\tau)$ with $L=O(\log 1/\eta)+D$ layers, $J=24$ channels and any $2 \le I \le D$. All parameters are bounded by $\tau = (c_0^2 \vee 1)$. Furthermore, the weight matrix in the fully connected layer of $\htimes$ has nonzero entries only in the first row.}
\end{proposition}

\begin{proof}[Proof of Proposition \ref{prop:A1-bound}]
    First, let us define a particular class of feed-forward ReLU networks of the form
    \begin{equation}\label{eq:ReLU-net}
        f(x) = W_L\cdot \mathrm{ReLU}(W_{L-1} \cdots \mathrm{ReLU}(W_1 x + b_1) \cdots + b_{L-1}) + b_L,
    \end{equation}
    as 
    \begin{align}
    	\cF(L,J,\tau) = \{f ~|~ & f(x) \text{ in the form (\ref{eq:ReLU-net}) with $L$ layers and width at most $J$, } \nonumber \\ 
    	& \norm{W_i}_{\infty,\infty}\le \tau,~ \norm{b_i}_{\infty} \le \tau \text{ for } i=1,\cdots, L\}.
    \end{align}

	By Proposition 3 in Yarotsky, there exists a feed-forward ReLU network that can approximate the multiplication operation between values with magnitude bounded by $c_0$, with $\eta$ error. Such feed-forward network has $O(\log 1/\eta)$ layers, whose width is all bounded by $6$, and all its parameters are bounded by $c_0^2$. Therefore, such a feed-forward network is sufficient to approximate $\times$ with $\eta$ error in $L^\infty$-norm, because the arguments of $\times$ are uniformly bounded $c_0$ by Assumption \ref{assumption:completeness}.
	
	Furthermore, by Lemma 8 in \citet{pmlr-v139-liu21e}, we can express the aforementioned feed-forward network with a single-block CNN in $\cF^{\rm SCNN}(L,J,I,\tau,\tau)$, where $L,J,I,\tau$ are as specified in the statement of the proposition.
\end{proof}

Proposition \ref{prop:A2-bound} quantifies the tradeoff between the size of a single-block CNN and its approximation error for the cardinal B-spline $f_i\circ \phi_i^{-1}$.

\begin{proposition}[Proposition 3 in \citet{pmlr-v139-liu21e}]\label{prop:A2-bound}
	\textit{Let $f_i\circ\phi_i^{-1}$ be defined as in (\ref{eq:f_0-decomp}). For any $\delta\in (0,1)$, set $N=C_1\delta^{-d/\alpha}$. For any $2\leq I\leq D$, there exists a set of single-block CNNs $\left\{\widehat{f}^{\rm SCNN}\right\}_{j=1}^N$ such that 
	\begin{align*}
		\left\|\sum_{j=1}^N \widehat{f}^{\rm SCNN}_{i,j}-f_i\circ \phi_i^{-1}\right\|_{L^\infty}\leq \delta,
	\end{align*}
	where $C_1$ is a constant depending on $\alpha,p,q$ and $d$.}
	
	\textit{$\widehat{f}^{\rm SCNN}_{i,j}$ is a single-block CNN approximation of $\tf_{i,j}$ (defined in (\ref{eq:spline-form})) in $\cF^{\rm SCNN}(L,J,I,\tau,\tau)$ with
	\begin{align*}
		\begin{aligned} 
			&L= O\left(\log(1/\delta)\right), J=\lceil 24d(\alpha+1)(\alpha+3)+8d\rceil, 
			\tau=O\left(\delta^{-(\log 2)(\frac{2d}{\alpha p-d}+c_1d^{-1})}\right).
		\end{aligned}
	\end{align*}
	The constant hidden in $O(\cdot)$ depends on $d,\alpha,\frac{2d}{\alpha p-d},p,q,c_0$.}
\end{proposition}

Proposition \ref{prop:A3-bound} quantifies the tradeoff between the size of the sub-networks for the chart determination network and its approximation error for the chart determination indicators and the distance function $d_i^2$.

\begin{proposition}[Lemma 9 in \citet{pmlr-v139-liu21e}]\label{prop:A3-bound}
	\textit{Let $d_i^2$ and $\mone_{[0,\beta^2]}$ be defined as in (\ref{eq:indicator-def}). For any $\theta\in (0,1)$ and $\Delta\geq 8B^2D\theta$, there exists a single-block CNN $\hdi$ approximating $d_i^2$ such that 
	$$\|\hdi-d_i^2\|_{L^\infty}\leq 4B^2D\theta,$$ 
	and a CNN $\moned$ approximating $\mone_{[0,\beta^2]}$ with
	\begin{align*}
		&\moned(x)=\begin{cases}
			1,&\mbox{ if } a\leq(1-2^{-k})(\beta^2-4B^2D\theta),\\
			0,&\mbox{ if } a\geq \beta^2-4B^2D\theta,\\
			2^k((\beta^2-4B^2D\theta)^{-1}a-1),&\mbox{ otherwise}.
		\end{cases}
	\end{align*}
	for $x\in\cX$. The single-block CNN for $\hdi$ has $O(\log(1/\theta))$ layers, $6D$ channels and all weights parameters are bounded by $4B^2$. The single-block CNN for $\mtoned$ has $\left\lceil\log(\beta^2/\Delta)\right\rceil$ layers, $2$ channels. All weight parameters are bounded by $\max(2,|\beta^2-4B^2D\theta|)$.}
	
	\textit{As a result, for any $x\in\cX$, $\moned\circ\hdi(x)$ gives an approximation of $\mone_{U_i}$ satisfying
	\begin{align*}
		&\moned\circ\hdi(x) =
		\begin{cases}
			1, &\mbox{ if } x\in U_i \mbox{ and } d_i^2(x)\leq \beta^2-\Delta;\\	
			0,&\mbox{ if } x\notin U_i; \\
			\mbox{ between 0 and 1}, &\mbox{ otherwise}.
		\end{cases}
	\end{align*}
	}
\end{proposition}

\subsubsection{Lemmae about Summation and Composition of CNNs}

Lemma \ref{lem:cnn-composition} states that the composition of two single-block CNNs can be expressed as one single-block CNN with augmented architecture.

\begin{lemma}\label{lem:cnn-composition}
	\textit{Let $\cF_1^{\rm SCNN}(L_1,J_1,I_1,\tau_1,\tau_1)$ be a CNN architecture from $\RR^D\rightarrow \RR$ and $\cF_2^{\rm SCNN}(L_2,J_2,I_2,\tau_2,\tau_2)$ be a CNN architecture from $\RR\rightarrow\RR$. Assume the weight matrix in the fully connected layer of $\cF_1^{\rm SCNN}(L_1,J_1,I_1,\tau_1,\tau_1)$ and $\cF_2^{\rm SCNN}(L_2,J_2,I_2,\tau_2,\tau_2)$ has nonzero entries only in the first row. Then there exists a CNN architecture $\cF^{\rm SCNN}(L,J,I,\tau,\tau)$ from $\RR^D\rightarrow \RR$ with
	\begin{align*}
		L=L_1+L_2, \ J=\max(J_1,J_2),\ I=\max(I_1,I_2), \tau=\max(\tau_1,\tau_2)
	\end{align*}
	such that for any $f_1\in \cF^{\rm SCNN}(L_1,J_1,I_1,\tau_1,\tau_1)$ and $f_2\in \cF^{\rm SCNN}(L_2,J_2,I_2,\tau_2,\tau_2)$, there exists $f\in \cF^{\rm SCNN}(L,J,I,\tau,\tau)$ such that $f(x)=f_2\circ f_1(x)$.
	Furthermore, the weight matrix in the fully connected layer of $\cF^{\rm SCNN}(L,J,I,\tau,\tau)$ has nonzero entries only in the first row.}
\end{lemma}

Lemma \ref{lem:CNN-adap} states that the sum of $n_0$ single-block CNNs with the same architecture can be expressed as the sum of $n_1$ single-block CNNs with modified width.

\begin{lemma}[Lemma 7 in \citet{liu2022benefits}]\label{lem:CNN-adap}
	\textit{Let $\{f_i\}_{i=1}^{n_0}$ be a set of single-block CNNs with architecture $\cF^{\rm SCNN}(L_0,J_0,I_0,\tau_0,\tau_0)$. For any integers $1\leq n\leq n_0$ and $\widetilde{J}$ satisfying $n\widetilde{J}=O(n_0J_0)$ and $\widetilde{J}\geq J_0$, there exists an architecture $\cF^{\rm SCNN}(L,J,I,\tau,\tau)$ that gives a set of single-block CNNs $\{g_i\}_{i=1}^{n}$  such that 
	\begin{align*}
		\sum_{i=1}^{n} g_i(x)= \sum_{i=1}^{n_0} f_i(x).
	\end{align*}
	Such an architecture has 
	\begin{align*}
		L=O(L_0), J=O(\widetilde{J}), I=I_0, \tau=\tau_0.
	\end{align*}
	Furthermore, the fully connected layer of $f$ has nonzero elements only in the first row.}
\end{lemma}

\section{Proof of CNN Class Covering Number}

In this section, we prove a bound on the covering number of the convolutional neural network class used in Algorithm \ref{alg:Neural-OPE}.

\begin{lemma}\label{lem:covering}
	\textit{Given $\delta > 0$, the $\delta$-covering number of the neural network class $\cF(M,L,J,I,\tau_1,\tau_2, V)$ satisfies
	\begin{align}\label{eq:covering}
		\cN(\delta, \cF(M,L,J,I,\tau_1,\tau_2, V), \norm{\cdot}_\infty) \leq \left(2(\tau_1 \vee \tau_2)\Lambda_1\delta^{-1}\right)^{\Lambda_2},
	\end{align}
	where 
	\begin{align*}
		&\Lambda_1=(M+3)JD(1\vee \tau_2)(1\vee \tau_1)\trho\trho^+,\ \Lambda_2=ML(J^2I+J)+JD+1
	\end{align*}
	with $\trho= \rho^M,\trho^+=1+ML\rho^+, \rho=(JI\tau_1)^L$ and $\rho^+=(1\vee JI\tau_1)^L$.}
	
	\textit{With a network architecture as stated in Theorem \ref{thm:approx}, we have
	\begin{align*}
	    \log \cN(\delta, \cF(M,L,J,I,\tau_1,\tau_2, V) = O\left(\widetilde{M}\widetilde{J}^2D^3\log^5(\widetilde{M}\widetilde{J})\log\frac{1}{\delta}\right),
	\end{align*}}
	where $O(\cdot)$ hides constant depending on $d$, $\alpha$, $\frac{2d}{\alpha p-d}$, $p$, $q$, $c_0$, $B$, $\omega$ and the surface area of $\cX$.
\end{lemma}

\subsection{Supporting Lemmae and Proofs}

Proposition \ref{prop:covering1} below provides an upper bound on the $L_\infty$-norm of a series of convolutional neural network blocks in terms of its architecture parameters, e.g. number of layers, number of channels, etc.

Let $J^{(i)}_m$ be the number of channels in $i$-th layer of the $m$-th block, and let $I^{(i)}_m$ be the filter size of $i$-th layer in the $m$-th block. $Q_{[i,j]}$ is defined as 
\begin{align*}
	Q_{[i,j]}(x)=&\left(\Conv_{\cW_j,\cB_j}\right)\circ\cdots\circ \left(\Conv_{\cW_i,\cB_i}\right)(x).
\end{align*}

\begin{proposition}\label{prop:covering1}
	\textit{For $m = 1, 2, \cdots, M$ and $x\in[-1,1]^{D}$, we have 
	\begin{align*}
		\norm{Q_{[1,m]}(x)}_\infty \le (1 \vee \tau_1)\left(\prod_{j=1}^{m}\prod_{i=1}^{L_j}J^{(i-1)}_jI^{(i)}_j\tau_1 \right)\left(1 + \sum_{k=1}^{m}L_k\prod_{i=1}^{L_k}(1 \vee J^{(i-1)}_kI^{(i)}_k\tau_1)\right).
	\end{align*}
	}
\end{proposition}

\begin{proof}
	\begin{align*}
		&\quad \norm{Q_{[1,m]}(x)}_\infty\\
		&= \norm{\Conv_{\cW_m,\cB_m}(Q_{[1,m-1]}(x))}_\infty\\
		&\le \prod_{i=1}^{L_m}J^{(i-1)}_mI^{(i)}_m\tau_1\norm{Q_{[1,m-1]}(x)}_\infty + \tau_1L_m\prod_{i=1}^{L_m}(1 \vee J^{(i-1)}_mI^{(i)}_m\tau_1)\\
		&\le \norm{P(x)}_\infty\prod_{j=1}^{m}\prod_{i=1}^{L_j}J^{(i-1)}_jI^{(i)}_j\tau_1 + \tau_1\sum_{k=1}^{m}L_k\prod_{i=1}^{L_k}(1 \vee J^{(i-1)}_kI^{(i)}_k\tau_1)\prod_{l=j+1}^{m}\prod_{i=1}^{L_l}J^{(i-1)}_lI^{(i)}_l\tau_1\\
		&\le \norm{x}_\infty\prod_{j=1}^{m}\prod_{i=1}^{L_j}J^{(i-1)}_jI^{(i)}_j\tau_1 + \tau_1\sum_{k=1}^{m}L_k\prod_{i=1}^{L_k}(1 \vee J^{(i-1)}_kI^{(i)}_k\tau_1)\prod_{l=j+1}^{m}\prod_{i=1}^{L_l}J^{(i-1)}_lI^{(i)}_l\tau_1\\
		&\le(1 \vee \tau_1)\left(\prod_{j=1}^{m}\prod_{i=1}^{L_j}J^{(i-1)}_jI^{(i)}_j\tau_1 \right)\left(1 + \sum_{k=1}^{m}L_k\prod_{i=1}^{L_k}(1 \vee J^{(i-1)}_kI^{(i)}_k\tau_1)\right),
	\end{align*}
	where the first two inequalities are obtained by applying Proposition 9 from \citet{pmlr-v97-oono19a} recursively.
\end{proof}

Lemma \ref{lem:F-uniform-bound} quantifies the sensitivity of a CNN with respect to small changes in its weight parameters. This will be used to create a discrete covering for the CNN class.

\begin{lemma}\label{lem:F-uniform-bound}
	\textit{For $f, f'\in \cF(M,L,J,I,\tau_1,\tau_2, V)$ such that for $\epsilon >0$, $\norm{W-W'}_\infty \le \epsilon$, $\norm{b-b'}_\infty \le \epsilon$, $\norm{\cW_m^{(l)}-{\cW_m^{(l)}}'}_\infty \le \epsilon$ and $\norm{\cB_m^{(l)}-{\cB_m^{(l)}}'}_\infty \le \epsilon$ for all $m$ and $l$, where $(W,b,\{\{(\cW_m^{(l)},\cB_m^{(l)})\}_{l=1}^{L_m}\}_{m=1}^M)$ and $(W',b',\{\{({\cW_m^{(l)}}',{\cB_m^{(l)}}')\}_{l=1}^{L_m}\}_{m=1}^M)$ are the parameters of $f$ and $f'$ respectively, we have
	\begin{align*}
		\norm{f - f'}_\infty \le \Lambda_1\epsilon,
	\end{align*}
	where $\Lambda_1$ is defined in Lemma \ref{lem:covering}.}
\end{lemma}

\begin{proof}
    For any $x\in [-1,1]^D$, 
	\begin{align*}
		&\quad \left|f(x) - f'(x)\right|\\
		&= \left|W\otimes Q(x) + b - W'\otimes Q'(x) - b'\right|\\
		&= \left|(W - W')\otimes Q(x) + b - b' + W'\otimes \left(Q(x) - Q'(x)\right)\right|\\
		&= \left|(W - W')\otimes Q(x) + b - b' + W'\otimes \left(Q(x) - \Conv_{\cW_M,\cB_M}(Q'(x)) + \Conv_{\cW_M,\cB_M}(Q'(x)) - Q'(x)\right)\right|\\
		&= \left|(W - W')\otimes Q(x) + b - b' + \sum_{m=1}^{M}W'\otimes Q_{[m+1,M]} \circ \left(\Conv_{\cW_m,\cB_m} - \Conv_{\cW_m',\cB_m'}\right)\circ Q_{[0,m-1]}'\right|\\
		&\le  \left|(W - W')\otimes Q(x;\theta) + b - b'\right| + \sum_{m=1}^{M}\left|W'\otimes Q_{[m+1,M]} \circ \left(\Conv_{\cW_m,\cB_m} - \Conv_{\cW_m',\cB_m'}\right)\circ Q_{[0,m-1]}'\right|\\
		&\stackrel{(a)}{\le} (3+M)JD(1 \vee \tau_1)(1 \vee \tau_2)\left(\prod_{j=1}^{M}\prod_{i=1}^{L_j}J^{(i-1)}_jI^{(i)}_j\tau_1 \right)\left(1 + \sum_{k=1}^{M}L_k\prod_{i=1}^{L_k}(1 \vee J^{(i-1)}_kI^{(i)}_k\tau_1)\right)\epsilon,
	\end{align*}
	where (a) is obtained through the following reasoning.
	
	The first term in (a) can be bounded as
	\begin{align*}
		&\quad \left|(W - W')\otimes Q(x) + b - b'\right|\\
		&\le \left(\norm{W}_0 + \norm{W'}_0\right)\norm{W-W'}_\infty\norm{Q(x)}_{\infty} + \norm{b - b'}_{\infty}\\
		&\le 2JD\epsilon\norm{Q(x)}_{\infty}  + \epsilon\\
		&\le 3JD\epsilon\norm{Q(x)}_{\infty}\\
		&\le 3JD\max\{1,\tau_1\}\left(\prod_{j=1}^{M}\prod_{i=1}^{L_j}J^{(i-1)}_jI^{(i)}_j\tau_1 \right)\left(1 + \sum_{k=1}^{M}L_k\prod_{i=1}^{L_k}(1 \vee J^{(i-1)}_kI^{(i)}_k\tau_1)\right)\epsilon,
	\end{align*}
	where the first inequality uses Proposition 8 from \citet{pmlr-v97-oono19a} and the last inequality is obtained by invoking Proposition \ref{prop:covering1}.

    For the second term in (a), it is true that for any $m = 1, \cdots, M$, we have
	\begin{align*}
		&\quad \left|W'\otimes Q_{[m+1,M]} \circ \left(\Conv_{\cW_m,\cB_m} - \Conv_{\cW_m',\cB_m'}\right)\circ Q_{[1,m-1]}'\right|\\
		&\stackrel{(b)}{\le} \norm{W'}_0 \tau_2 \norm{Q_{[m+1,M]} \circ \left(\Conv_{\cW_m,\cB_m} - \Conv_{\cW_m',\cB_m'}\right)\circ Q_{[1,m-1]}'}_\infty\\
		&\stackrel{(c)}{\le} JD\tau_2 \left(\prod_{j=m+1}^{M}\prod_{i=1}^{L_j}J^{(i-1)}_jI^{(i)}_j\tau_1\right)\norm{\left(\Conv_{\cW_m,\cB_m} - \Conv_{\cW_m',\cB_m'}\right)\circ Q_{[1,m-1]}'}_\infty\\
		&\stackrel{(d)}{\le} JD\tau_2 \left(\prod_{j=m+1}^{M}\prod_{i=1}^{L_j}J^{(i-1)}_jI^{(i)}_j\tau_1\right)\left(\prod_{i=1}^{L_m}J^{(i-1)}_mI^{(i)}_m\tau_1 \norm{Q_{[1,m-1]}'}_\infty\epsilon\right)\\
		&\stackrel{(e)}{\le} JD\tau_2\left(\prod_{j=m+1}^{M}\prod_{i=1}^{L_j}J^{(i-1)}_jI^{(i)}_j\tau_1\right)\left(\prod_{i=1}^{L_m}J^{(i-1)}_mI^{(i)}_m\tau_1\right)\\ & \qquad (1 \vee \tau_1)\left(\prod_{j=1}^{m}\prod_{i=1}^{L_j}J^{(i-1)}_jI^{(i)}_j\tau_1 \right)\left(1 + \sum_{k=1}^{m}L_k\prod_{i=1}^{L_k}(1 \vee J^{(i-1)}_kI^{(i)}_k\tau_1)\right)\epsilon\\
		&\le JD\tau_2\left(\prod_{j=1}^{M}\prod_{i=1}^{L_j}J^{(i-1)}_jI^{(i)}_j\tau_1\right)(1 \vee \tau_1)\left(1 + \sum_{k=1}^{M}L_k\prod_{i=1}^{L_k}(1 \vee J^{(i-1)}_kI^{(i)}_k\tau_1)\right)\epsilon,
	\end{align*}
	where (b) is by Proposition 7 from \citet{pmlr-v97-oono19a}, (c) is by Proposition 2 and 4 from \citet{pmlr-v97-oono19a}, (d) is by Proposition 2 and 5 from \citet{pmlr-v97-oono19a}, and (e) is obtained by invoking Proposition \ref{prop:covering1}.
\end{proof}

\subsection{Proof of Lemma \ref{lem:covering}}
\begin{proof}[Proof of Lemma \ref{lem:covering}]
	We grid the range of each parameter into subsets with width $\Lambda^{-1}_1\delta$, so there are at most $2(\tau_1 \vee \tau_2)\Lambda_1\delta^{-1}$ different subsets for each parameter. In total, there are $\left(2(\tau_1 \vee \tau_2)\Lambda_1\delta^{-1}\right)^{\Lambda_2}$ bins in the grid. For any $f,f'\in \cF(M,L,J,I,\tau_1,\tau_2, V)$ within the same grid, by Lemma \ref{lem:F-uniform-bound}, we have $\norm{f-f'}_\infty \le \delta$. We can construct the $\epsilon$-covering with cardinality $\left(2(\tau_1 \vee \tau_2)\Lambda_1\delta^{-1}\right)^{\Lambda_2}$ by selecting one neural network from each bin in the grid.
	
	Taking log and plugging in the network architecture parameters in Lemma \ref{thm:approx}, we have
	\begin{align*}
	    \log \cN(\delta, \cF(M,L,J,I,\tau_1,\tau_2, V), \norm{\cdot}_\infty) &= O\left(\Lambda_2\log \left(\left(\tau_1 \vee \tau_2\right)\Lambda_1\delta^{-1}\right)\right)\\
	    &\le O\left(\widetilde{M}DD^2\widetilde{J}^2\log(\widetilde{M}\widetilde{J})\log^2(\widetilde{M}\widetilde{J})\log^2(\widetilde{M}\widetilde{J})\log\frac{1}{\delta}\right)\\
	    &= O\left(\widetilde{M}\widetilde{J}^2D^3\log^5(\widetilde{M}\widetilde{J})\log\frac{1}{\delta}\right),
	\end{align*}
	where the inequality is due to $\Lambda_2 = O(\widetilde{M}DD^2\widetilde{J}^2\log(\widetilde{M}\widetilde{J}))$. By plugging in the choice of $\tau_1$, $\rho = (1/2)^LM^{-1} \le M^{-1}$, so $\trho = (1+M^{-1})^M \le e$. Moreover, $\trho^+ = 1+ML$. 
\end{proof}

\section{Statistical Result of CNN-Besov Approximation (Lemma \ref{thm:stat})}\label{sec:appendix-proof-stat}

In this section, we derive the statistical estimation error for using a CNN empirical MSE minimizer to estimate a Besov ground truth function over an i.i.d. dataset. We need to choose the appropriate CNN architecture and size in order to balance the approximation error from Theorem \ref{thm:approx} and variance. Thsi statistical estimation error can be decomposed into the error of using CNN to approximate Besov function (Theorem \ref{thm:approx}), terms that grow with the covering number of our CNN class, and the error of using the discrete covering to approximate our CNN class. 

In Theorem \ref{thm:FQE}, we expand the estimation error $\widehat{v}^\pi - v^\pi$ over time steps and upper-bound the amount of estimation error in each time step with Lemma \ref{thm:stat}. Details of Theorem \ref{thm:FQE} are in Appendix \ref{sec:appendix-proof-FQE}.

\begin{lemma}\label{thm:stat}
    \textit{Let $\cX$ be a $d$-dimensional compact Riemannian manifold that satisfies Assumption \ref{assumption:modelspace}. We are given a function $f_0 \in \Bnorm(\cX)$, where $s,p,q$ satisfies Assumption \ref{assumption:completeness}. We are also given samples $S_n = \{(x_i,y_i)\}_{i=1}^n$, where $x_i$ are i.i.d. sampled from a distribution $\cP_x$ on $\cX$ and $y_i = f_0(x_i) + \zeta_i$. $\zeta_i$'s are i.i.d. sub-Gaussian random noise with variance $\sigma^2$, uncorrelated with $x_i$'s. If we compute an estimator
        \begin{equation*}
            \widehat{f}_n = \arg\min_{f\in \cF}\frac{1}{n}\sum_{i=1}^n\left(f(x_i) - y_i\right)^2,
        \end{equation*}
        with the neural network class $\cF = \cF(M,L,J,I,\tau_1,\tau_2,V)$ such that
        \begin{gather}
    		\hspace{0.35in} L=O(\log n + D+\log D), ~ J=O(D), ~ \tau_1=O(1), ~ \log\tau_2=O(\log^2 n + D\log n),\nonumber \\ ~M=O(n^{\frac{d}{2\alpha+d}}), ~V = \norm{f_0}_\infty,
    	\end{gather}
        with any integer $I\in[2,D]$ and $\widetilde{M},\widetilde{J}>0$ satisfying $\widetilde{M}\widetilde{J}=O(n^{\frac{d}{2\alpha+2d}})$, then we have
        \begin{equation}
            \EE\left[\int_\cX \left(\widehat{f}_n(x) - f_0(x)\right)^2 \ud\cP_x(x)\right] \le c\left(V_\cF^2 + \sigma^2\right)n^{-\frac{2\alpha}{2\alpha + d}}\log^5 n,
        \end{equation}
        where $V_\cF = \norm{f_0}_\infty$ and the expectation is taken over the training sample $S_n$, and $c$ is a constant depending on $D^{\frac{6\alpha}{2\alpha + 2d}}$, $d$, $\alpha$, $\frac{2d}{\alpha p-d}$, $p$, $q$, $c_0$, $B$, $\omega$ and the surface area of $\cX$. $O(\cdot)$ hides constant depending on $d$, $\alpha$, $\frac{2d}{\alpha p-d}$, $p$, $q$, $c_0$, $B$, $\omega$ and the surface area of $\cX$.
        }
\end{lemma}

First, note that the nonparametric regression error can be decomposed into two terms:
\begin{align*}
	\EE\left[\int_\cX \left(\hat{f}_n(x) - f_0(x)\right)^2 d \cD_x(x) \right] & = \underbrace{2 \EE \left[\frac{1}{n}\sum_{i=1}^n (\hat{f}_n(x_i) - f_0(x_i))^2 \right]}_{T_1} \\
	& \quad + \underbrace{\EE\left[\int_{\cX} \left(\hat{f}_n(x) - f_0(x)\right)^2 d \cD_x(x)\right] - 2 \EE \left[\frac{1}{n}\sum_{i=1}^n (\hat{f}_n(x_i) - f_0(x_i))^2 \right]}_{T_2},
\end{align*}
where $T_1$ reflects the squared bias of using neural networks to approximate ground truth $f_0$, which is related to Theorem \ref{thm:approx}, and $T_2$ is the variance term.

\subsection{Supporting Lemmae}

\begin{lemma}[Lemma 5 in \citet{chen2021nonparametric}]\label{lem:T1}
	Fix the neural network class $\cF(M,L,J,I,\tau_1,\tau_2, V)$. For any constant $\delta \in (0, 2V)$, we have
	\begin{align*}
		T_1 & \leq 4 \inf_{f \in \cF(M,L,J,I,\tau_1,\tau_2, V)} \int_{\cX} (f(x) - f_0(x))^2 d\cP_x(x)\\
		& \hspace{1.5in} + 48 \sigma^2 \frac{\log \cN(\delta, \cF(M,L,J,I,\tau_1,\tau_2, V), \norm{\cdot}_\infty) + 2}{n} \\
		& \hspace{1.5in} + (8\sqrt{6} \sqrt{\frac{\log \cN(\delta, \cF(M,L,J,I,\tau_1,\tau_2, V), \norm{\cdot}_\infty) + 2}{n}} + 8)\sigma \delta,
	\end{align*}
	where $\cN(\delta, \cF(M,L,J,I,\tau_1,\tau_2, V), \norm{\cdot}_\infty)$ denotes the $\delta$-covering number of $\cF(M,L,J,I,\tau_1,\tau_2, V)$ with respect to the $\ell_\infty$ norm, i.e., there exists a discretization of $\cF(M,L,J,I,\tau_1,\tau_2, V)$ into $\cN(\delta, \cF(M,L,J,I,\tau_1,\tau_2, V), \norm{\cdot}_\infty)$ distinct elements, such that for any $f \in \cF$, there is $\bar{f}$ in the discretization satisfying $\norm{\bar{f} - f}_\infty \leq \epsilon$.
\end{lemma}

\begin{lemma}[Lemma 6 in \citet{chen2021nonparametric}]\label{lem:T2}
	For any constant $\delta \in (0, 2R)$, $T_2$ satisfies
	\begin{align*}
		T_2 \leq \frac{104V^2}{3n} \log \cN(\delta/4V, \cF(M,L,J,I,\tau_1,\tau_2, V), \norm{\cdot}_\infty) + \left(4 +\frac{1}{2V}\right)\delta.
	\end{align*}
\end{lemma}

\subsection{Proof of Lemma \ref{thm:stat}}
\begin{proof}[Proof of Lemma \ref{thm:stat}]
	Recall that the bias and variance decomposition of $\EE\left[\int_{\cX} \left(\hat{f}_n(x) - f_0(x)\right)^2 d\cP_x(x) \right]$ as
	\begin{align*}
		\EE\left[\int_\cX \left(\hat{f}_n(x) - f_0(x)\right)^2 d\cP_x(x) \right] & = \underbrace{\EE \left[\frac{2}{n}\sum_{i=1}^n (\hat{f}_n(x_i) - f_0(x_i))^2 \right]}_{T_1} \\
		& \quad + \underbrace{\EE \left[\int_\cX \left(\hat{f}_n(x) - f_0(x)\right)^2 d\cP_x(x) \right] - \EE \left[\frac{2}{n}\sum_{i=1}^n (\hat{f}_n(x_i) - f_0(x_i))^2 \right]}_{T_2}.
	\end{align*}
	Applying the upper bounds of $T_1$ and $T_2$ in Lemmas \ref{lem:T1} and \ref{lem:T2} respectively, we can derive
	\begin{align*}
		\EE\left[\int_\cX \left(\hat{f}_n(x) - f_0(x)\right)^2 d\cP_x(x) \right] & \leq 4 \inf_{f \in \cF(M,L,J,I,\tau_1,\tau_2, V)} \int_{\cX} (f(x) - f_0(x))^2 d\cP_x(x) \\
		& \quad + 48\sigma^2\frac{\log \cN(\delta, \cF(M,L,J,I,\tau_1,\tau_2, V), \norm{\cdot}_\infty)+2}{n} \\
		& \quad + 8\sqrt{6} \sqrt{\frac{\log \cN(\delta, \cF(M,L,J,I,\tau_1,\tau_2, V), \norm{\cdot}_\infty)+2}{n}} \sigma \delta \\
		& \quad + \frac{104V_\cF^2}{3n} \log \cN(\delta/4V, \cF(M,L,J,I,\tau_1,\tau_2, V), \norm{\cdot}_\infty) \\
		& \quad + \left(4 +\frac{1}{2V_\cF} + 8\sigma\right)\delta.
	\end{align*}
	We need there to exist a network in $\cF(M,L,J,I,\tau_1,\tau_2, V)$ which can yield a function $f$ satisfying $\norm{f - f_0}_\infty \leq \epsilon$ for $\epsilon \in (0, 1)$. $\epsilon$ will be chosen later to balance the bias-variance tradeoff. In order to achieve such $\epsilon$-error, we set $\widetilde{M}\widetilde{J} = \epsilon^{-d/\alpha}$, so we now have our network architecture as specified in Theorem \ref{thm:approx} in terms of $\epsilon$. Then, we can use the parameters in this architecture to invoke the upper bound of the covering number in Lemma \ref{lem:covering}:
	\begin{align*}
	    \log \cN(\delta, \cF(M,L,J,I,\tau_1,\tau_2, V), \norm{\cdot}_\infty) &= O\left(\Lambda_2\log \left(\left(\tau_1 \vee \tau_2\right)\Lambda_1\delta^{-1}\right)\right)\\
	    &\le O\left(\widetilde{M}\widetilde{J}^2D^3\log^5(\widetilde{M}\widetilde{J})\log\frac{1}{\delta}\right)\\
	    &= O\left(\epsilon^{-d/\alpha}D^3\log^5\epsilon\log\frac{1}{\delta}\right),
	\end{align*}
	where $O(\cdot)$ hides constant depending on $\log D$, $d$, $\alpha$, $\frac{2d}{\alpha p-d}$, $p$, $q$, $c_0$, $B$, $\omega$ and the surface area of $\cX$.
	
	Plugging it in, we have
	\begin{align}
		\EE\left[\int_\cX \left(\hat{f}_n(x) - f_0(x)\right)^2 d\cD_x(x) \right]  & \leq 4\epsilon^2 + \frac{48\sigma^2}{n} \left(c''\epsilon^{-d/\alpha}D^3\log^5\epsilon\log\frac{1}{\delta} + 2\right) \nonumber \\
		& \quad + 8\sqrt{6c''} \sqrt{\frac{\epsilon^{-d/\alpha}D^3\log^5\epsilon\log\frac{1}{\delta}}{n}} \sigma \delta \nonumber \\
		& \quad + \frac{104V^2}{3n} \epsilon^{-d/\alpha}D^3\log^5\epsilon\log\frac{1}{\delta} \nonumber \\
		& \quad + \left(4 +\frac{1}{2V_\cF} + 8\sigma\right)\delta \nonumber \\
		& = \tilde{O}\bigg(\epsilon^2 + \frac{V_\cF^2+\sigma^2}{n} \epsilon^{-\frac{d}{\alpha}}D^3\log^5\epsilon\log \frac{1}{\delta} \nonumber \\
		& \qquad \quad + \sigma \delta \sqrt{\frac{\epsilon^{-\frac{d}{\alpha}}D^3\log^5\epsilon\log \frac{1}{\delta}}{n}} + \sigma \delta + \frac{\sigma^2}{n} \bigg). \label{eq:staterrorboundraw}
	\end{align}
	Finally we choose $\epsilon$ to satisfy $\epsilon^2 = \frac{1}{n}D^3 \epsilon^{-\frac{d}{\alpha}}$, which gives $\epsilon = D^{\frac{3\alpha}{2\alpha + d}}n^{-\frac{\alpha}{2\alpha + d}}$. It suffices to pick $\delta = \frac{1}{n}$. Substituting both $\epsilon$ and $\delta$ into (\ref{eq:staterrorboundraw}), we deduce the desired estimation error bound
	\begin{align*}
		\EE\left[\int_\cX \left(\hat{f}_n(x) - f_0(x)\right)^2 d\cD_x(x) \right] &\leq c (V_\cF^2 + \sigma^2)n^{-\frac{2\alpha}{2\alpha + d}} \log^5 n,
	\end{align*}
	where constant $c$ depends on $D^{\frac{6\alpha}{2\alpha + d}}$, $d$, $\alpha$, $\frac{2d}{\alpha p-d}$, $p$, $q$, $c_0$, $B$, $\omega$ and the surface area of $\cX$.
\end{proof}

\section{A Result for Feed-Forward ReLU Neural Network}\label{sec:appendix-proof-FQE-relu}

\subsection{Feed-Forward ReLU Neural Network}

We consider multi-layer ReLU (Rectified Linear Unit) neural networks \citep{pmlr-v15-glorot11a}. ReLU activation is popular in computer vision, natural language processing, etc. because the vanishing gradient issue is less severe with it, which is nonetheless common with its counterparts like sigmoid or hyperbolic tangent activation \citep{pmlr-v15-glorot11a,GoodBengCour16}. An $L$-layer ReLU neural network can be expressed as
\begin{equation}\label{eq:ReLU-net-ffn}
    f(x) = W_L\cdot \mathrm{ReLU}(W_{L-1} \cdots \mathrm{ReLU}(W_1 x + b_1) \cdots + b_{L-1}) + b_L,
\end{equation}
in which $W_1, \cdots, W_L$ and $b_1, \cdots, b_L$ are weight matrices and vectors and $\mathrm{ReLU}(\cdot)$ is the entrywise rectified linear unit, i.e. $\mathrm{ReLU}(a) = \max\{0, a\}$. The width of a neural network is defined as the number of neurons in its widest layer. For notational simplicity, we define a class of neural networks 
\begin{align}\label{eq:nn_class_relu}
	\cF(L,p,I,\tau,V) = \{f ~|~ & f(x) \text{ in the form (\ref{eq:ReLU-net-ffn}) with $L$ layers and width at most $p$, } \nonumber \\ 
	&\hspace{-1in}\norm{f}_\infty\le V,~  \sum_{i=1}^{L}\norm{W_i}_0 + \norm{b_i}_0\le I,~ \norm{W_i}_{\infty,\infty}\le \tau,~ \norm{b_i}_{\infty} \le \tau \text{ for } i=1,\cdots, L\}.
\end{align}

\subsection{Theorem \ref{thm:FQE-relu} and Its Proof}

From this point, we denote the function class $\cF(L,p,I,\tau,V)$, whose parameters $L,p,I,\tau,V$ are chosen according to Theorem \ref{thm:FQE-relu}, with the shorthand $\cF$. In this section, this $\cF$ is used in Algorithm \ref{alg:Neural-OPE}, instead of the CNN class in (\ref{eq:cnn_class}).

\begin{theorem}[]\label{thm:FQE-relu}
    Suppose Assumption \ref{assumption:modelspace} and \ref{assumption:completeness} hold. By choosing
    \begin{equation}\label{eq:F-param-relu}
    \begin{split}
        & L = O\left(\log K\right), \quad p = O\big(K^{\frac{d}{2\alpha + d}}\big), \quad I = O\big(K^{\frac{d}{2\alpha + d}}\log K\big),\\ 
        & \hspace{0.6in}\tau = \max\{B,H,\sqrt{d},\omega^2\},\quad V = H
    \end{split}
    \end{equation}
    in Algorithm \ref{alg:Neural-OPE}, in which $O(\cdot)$ hides factors depending on $\alpha$, $d$ and $\log D$, we have
    \begin{align}
        \EE \left|v^\pi - \widehat{v}^\pi\right| \le C H^2 \kappa \left(K^{-\frac{\alpha}{2\alpha+d}} + \sqrt{D/K}\right)\log^{\frac{3}{2}}K,
    \end{align}
    in which the expectation is taken over the data, and $C$ is a constant depending on $\log D$, $\alpha$, $B$, $d$, $\omega$, the surface area of $\cX$ and $c_0$. The distributional mismatch is captured by $$\displaystyle\kappa = \frac{1}{H}\sum_{h=1}^{H}\sqrt{\chi_{\cQ}^2(q_{h}^\pi, q_{h}^{\pi_0}) + 1},$$ in which $\cQ$ is the Minkowski sum between the ReLU function class and the Besov function class, i.e., $\cQ = \{f+g ~|~ f\in\Bnorm(\cX), g\in\cF\}$.
\end{theorem}

\begin{proof}[Proof of Theorem \ref{thm:FQE-relu}]
The goal is to bound
\begin{align*}
    \EE\left|\widehat{v}^\pi - v^\pi\right| &= \EE\left|\int_\cX\left(Q_1^\pi - \widehat{Q}_1^\pi\right)(s,a)\ud q_1^\pi(s,a)\right| \le \EE\left[\int_\cX\left|Q_1^\pi - \widehat{Q}_1^\pi\right|(s,a)\ud q_1^\pi(s,a)\right].
\end{align*}

    To get an expression for that, we first expand it recursively. To illustrate the recursive relation, we examine the quantity at step $h$:
    \begin{align*}
        & \quad \EE\left[\int_\cX\left|Q_h^\pi - \widehat{Q}_h^\pi\right|(s,a)\ud q_{h}^\pi(s,a)\right]\\
        &= \EE\left[\int_\cX\left|\cT_h^\pi Q_{h+1}^\pi - \widehat{\cT}_h^\pi\left(\widehat{Q}_{h+1}^\pi\right)\right|(s,a)\ud q_{h}^\pi(s,a)\right]\\
        &\le \EE\left[\int_\cX\left|\cT_h^\pi Q_{h+1}^\pi - \cT_h^\pi \widehat{Q}_{h+1}^\pi\right|(s,a)\ud q_{h}^\pi(s,a)\right] + \EE\left[\int_\cX\left|\cT_h^\pi \widehat{Q}_{h+1}^\pi - \widehat{\cT}_h^\pi\left(\widehat{Q}_{h+1}^\pi\right)\right|(s,a)\ud q_{h}^\pi(s,a)\right]\\
        &= \EE\left[\int_\cX\left|Q_{h+1}^\pi - \widehat{Q}_{h+1}^\pi\right|(s,a)\ud q_{h+1}^\pi(s,a)\right]\\ &\quad + \EE\left[\EE\left[\int_\cX\left|\cT_h^\pi \widehat{Q}_{h+1}^\pi - \widehat{\cT}_h^\pi\left(\widehat{Q}_{h+1}^\pi\right)\right|(s,a)\ud q_{h}^\pi(s,a) ~\vert~ \cD_{h+1},\cdots,\cD_H\right]\right]\\
        &\stackrel{(a)}{\le} \EE\left[\int_\cX\left|Q_{h+1}^\pi - \widehat{Q}_{h+1}^\pi\right|(s,a)\ud q_{h+1}^\pi(s,a)\right]\\ & \quad + \EE\left[\EE\left[\sqrt{\int_\cX\left(\cT_h^\pi \widehat{Q}_{h+1}^\pi - \widehat{\cT}_h^\pi\left(\widehat{Q}_{h+1}^\pi\right)\right)^2(s,a)\ud q_h^{\pi_0}(s,a)}\sqrt{\chi_{\cQ}^2(q_{h}^\pi, q_h^{\pi_0}) + 1} ~\vert~ \cD_{h+1},\cdots,\cD_H\right]\right]\\
        &\stackrel{(b)}{\le} \EE\left[\int_\cX\left|Q_{h+1}^\pi - \widehat{Q}_{h+1}^\pi\right|(s,a)\ud q_{h+1}^\pi(s,a)\right]\\ & \quad + \sqrt{\EE\left[\EE\left[\int_\cX\left(\cT_h^\pi \widehat{Q}_{h+1}^\pi - \widehat{\cT}_h^\pi\left(\widehat{Q}_{h+1}^\pi\right)\right)^2(s,a)\ud q_h^{\pi_0}(s,a)~\vert~ \cD_{h+1},\cdots,\cD_H\right]\right]}\sqrt{\chi_{\cQ}^2(q_{h}^\pi, q_h^{\pi_0}) + 1}\\
        &\stackrel{(c)}{\le} \int_\cX\left|Q_{h+1}^\pi - \widehat{Q}_{h+1}^\pi\right|(s,a)\ud q_{h+1}^\pi(s,a) + \sqrt{c(5H^2)\left(K^{-\frac{2\alpha}{2\alpha+d}} + \frac{D}{K}\right)\log^3 K}\sqrt{\chi_{\cQ}^2(q_{h}^\pi, q_h^{\pi_0}) + 1}\\
        &\le \int_\cX\left|Q_{h+1}^\pi - \widehat{Q}_{h+1}^\pi\right|(s,a)\ud q_{h+1}^\pi(s,a) + CH\left(K^{-\frac{\alpha}{2\alpha+d}} + \sqrt{\frac{D}{K}}\right)\log^{3/2}K\sqrt{\chi_{\cQ}^2(q_{h}^\pi, q_h^{\pi_0}) + 1},
    \end{align*}
    where $C$ denotes a (varying) constant depending on $\log D$, $\alpha$, $B$, $d$, $\omega$, the surface area of $\cX$ and $c_0$. 
    
    In (a), note $\cT_h^\pi \widehat{Q}_{h+1}^\pi \in \Bnorm(\cX)$ by Assumption \ref{assumption:completeness} and $-\widehat{\cT}_h^\pi\left(\widehat{Q}_{h+1}^\pi\right) \in \cF$ by our algorithm, so $\cT_h^\pi \widehat{Q}_{h+1}^\pi - \widehat{\cT}_h^\pi\left(\widehat{Q}_{h+1}^\pi\right)\in\cQ$. Then we obtain this inequality by invoking the following lemma. 
    
    In (b), we use Jensen's inequality and the fact that square root is concave. 
    
    To obtain (c), we invoke the following lemma, which provides an upper bound on the regression error.
    
    Specifically, we will use Lemma \ref{lem:stat-relu} when conditioning on $\cD_{h+1}, \cdots, \cD_H$, i.e. the data from time step $h+1$ to time step $H$. Note that after conditioning, $\cT_h^\pi \widehat{Q}_{h+1}^\pi$ becomes measurable and deterministic with respect to $\cD_{h+1}, \cdots, \cD_H$. Also, $\cD_{h+1}, \cdots, \cD_H$ are independent from $\cD_{h}$, which we use in the regression at step $h$.
    
    To justify our use of Lemma \ref{lem:stat-relu}, we need to cast our problem into a regression problem described in the lemma. Since $\{(s_{h,k}, a_{h,k})\}_{k=1}^K$ are i.i.d. from $q_h^{\pi_0}$, we can view them as the samples $x_i$'s in the lemma. We can view $\cT_h^\pi\widehat{Q}_{h+1}^\pi$, which is measurable under our conditioning, as $f_0$ in the lemma. Furthermore, we let
    \begin{equation*}
        \zeta_{h,k} := r_{h,k} + \int_\cA \widehat{Q}_{h+1}^\pi(s_{h,k}',a)\pi(a ~|~ s_{h,k}') \ud a - \cT_h^\pi \widehat{Q}_{h+1}^\pi(s_{h,k}, a_{h,k}).
    \end{equation*}
    In order to invoke Lemma \ref{lem:stat-relu} under the conditioning on $\cD_{h+1}, \cdots, \cD_H$, we need to verify whether three conditions are satisfied (conditioning on $\cD_{h+1}, \cdots, \cD_H$):
    \begin{enumerate}
        \item Sample $\{(s_{h,k}, a_{h,k})\}_{k=1}^K$ are i.i.d;
        
        \item Sample $\{(s_{h,k}, a_{h,k})\}_{k=1}^K$ and noise $\{\zeta_{h,k}\}_{k=1}^K$ are uncorrelated;
        
        \item Noise $\{\zeta_{h,k}\}_{k=1}^K$ are independent, zero-mean, subgaussian random variables.
    \end{enumerate}
    In our setting, $\{(s_{h,k}, a_{h,k})\}_{k=1}^K$ are i.i.d. from $q_h^{\pi_0}$. Due to the time-inhomogeneous setting, they are independent from $\cD_{h+1}, \cdots, \cD_H$, so $\{(s_{h,k}, a_{h,k})\}_{k=1}^K$ are still i.i.d. under our conditioning. Thus, Condition 1 is clearly satisfied. 
    
    We may observe that under our conditioning, the transition from $(s_{h,k}, a_{h,k})$ to $s_{h,k}'$ is the only source of randomness in $\zeta_{h,k}$, besides $(s_{h,k}, a_{h,k})$ itself. The distribution of $(s_{h,k}, a_{h,k}, s_{h,k}')$ is actually the product distribution between $P_h(\cdot|s_{h,k}, a_{h,k})$ and $q_h^{\pi_0}$, so a function of $s_{h,k}'$, generated from the transition distribution $P_h(\cdot|s_{h,k}, a_{h,k})$, is uncorrelated with $(s_{h,k}, a_{h,k})$. Thus, $(s_{h,k}, a_{h,k})$'s are uncorrelated with $\zeta_{h,k}$'s under our conditioning, and Condition 2 is satisfied.
    
    Condition 3 can also be easily verified. Under our conditioning, the randomness in $\zeta_{h,k}$ only comes from $(s_{h,k}, a_{h,k}, s_{h,k}', r_{h,k})$, which are independent from $(s_{h,k'}, a_{h,k'}, s_{h,k'}', r_{h,k'})$ for any $k'\neq k$, so $\zeta_{h,k}$'s are independent from each other. As for the mean of $\zeta_{h,k}$,
    \begin{align*}
        & \quad \EE\left[\zeta_{h,k} ~|~ \cD_{h+1}, \cdots, \cD_H\right]\\
        &= \EE\left[r_{h,k} + \int_\cA \widehat{Q}_{h+1}^\pi(s_{h,k}',a)\pi(a ~|~ s_{h,k}') \ud a - r_h(s_{h,k}, a_{h,k}) - \cP_h^\pi \widehat{Q}_{h+1}^\pi(s_{h,k}, a_{h,k}) ~|~ \cD_{h+1}, \cdots, \cD_H\right]\\
        &= \EE\bigg[r_{h,k} - r_h(s_{h,k}, a_{h,k}) + \int_\cA \widehat{Q}_{h+1}^\pi(s_{h,k}',a)\pi(a ~|~ s_{h,k}') \ud a\\ & \quad - \EE_{s'\sim P_h(\cdot|s_{h,k}, a_{h,k})}\left[\int_\cA \widehat{Q}_{h+1}^\pi(s',a)\pi(a ~|~ s') \ud a ~|~ s_{h,k}, a_{h,k},\cD_{h+1}, \cdots, \cD_H\right] ~|~ \cD_{h+1}, \cdots, \cD_H\bigg]\\
        &= 0 + 0 = 0.
    \end{align*}
    
    On the other hand, $\norm{\widehat{Q}_{h+1}^\pi}_\infty \le H$ almost surely, because it is a function in our ReLU network class $\cF$. Thus, $\zeta_{h,k}$ is a bounded random variable with $\zeta_{h,k} \in [-2H, 2H]$ almost surely, so its variance is bounded by $4H^2$. Its boundedness also implies it is a subgaussian random variable. Thus, Condition 3 is also satisfied.
    
    Hence, Lemma \ref{lem:stat-relu} proves, for step $h$ in our algorithm, 
    \begin{align*}
        &\quad \EE\left[\int_\cX\left(\cT_h^\pi \widehat{Q}_{h+1}^\pi - \widehat{\cT}_h^\pi\left(\widehat{Q}_{h+1}^\pi\right)\right)^2(s,a)\ud q_h^{\pi_0}(s,a)~\vert~ \cD_{h+1},\cdots,\cD_H\right]\\
        &\le c(H^2 + 4H^2)\left(K^{-\frac{2\alpha}{2\alpha+d}} + \frac{D}{K}\right)\log^3 K.
    \end{align*}
    Note that this upper bound holds for any $\widehat{Q}_{h+1}^\pi$ or $\cD_{h+1},\cdots,\cD_H$. The sole purpose of our conditioning is that we could view $\widehat{Q}_{h+1}^\pi$ as a measurable or deterministic function under the conditioning and then apply Lemma \ref{lem:stat-relu}. Therefore, 
    \begin{align*}
        &\quad \EE\left[\EE\left[\int_\cX\left(\cT_h^\pi \widehat{Q}_{h+1}^\pi - \widehat{\cT}_h^\pi\left(\widehat{Q}_{h+1}^\pi\right)\right)^2(s,a)\ud q_h^{\pi_0}(s,a)~\vert~ \cD_{h+1},\cdots,\cD_H\right]\right]\\
        &\le c(H^2 + 4H^2)\left(K^{-\frac{2\alpha}{2\alpha+d}} + \frac{D}{K}\right)\log^3 K.
    \end{align*}
    
    Finally, we carry out the recursion from time step $1$ to time step $H$, and the final result is
    \begin{align*}
        \EE\left|v^\pi - \widehat{v}^\pi\right| &\le CH^2\left(K^{-\frac{\alpha}{2\alpha+d}} + \sqrt{\frac{D}{K}}\right)\log^{3/2}K\left(\frac{1}{H}\sum_{h=1}^H\sqrt{\chi_{\cQ}^2(q_{h}^\pi, q_h^{\pi_0}) + 1}\right).
    \end{align*}
\end{proof}

\subsection{Lemma \ref{lem:stat-relu} and Its Proof}

\begin{lemma}\label{lem:stat-relu}
        \textit{
        Let $\cX$ be a $d$-dimensional compact Riemannian manifold isometrically embedded in $\RR^D$ with reach $\omega$. There exists a constant $B > 0$ such that for any $x\in\cX$, $|x_j|\le B$ for all $j=1,\cdots, D$. We are given a function $f_0 \in \Bnorm(\cX)$ and samples $S_n = \{(x_i,y_i)\}_{i=1}^n$, where $x_i$ are i.i.d. sampled from a distribution $\cP_x$ on $\cX$ and $y_i = f_0(x_i) + \zeta_i$. $\zeta_i$'s are i.i.d. sub-Gaussian random noise with variance $\sigma^2$, uncorrelated with $x_i$'s. If we compute an estimator
        \begin{equation*}
            \widehat{f}_n = \arg\min_{f\in \cF}\frac{1}{n}\sum_{i=1}^n\left(f(x_i) - y_i\right)^2,
        \end{equation*}
        with the neural network class $\cF = \cF(L, p, I, \tau, V)$ such that
        \begin{gather}\label{eq:F-param-stat-relu}
            L = O\left(\log n\right), p = O\left(n^{\frac{d}{2\alpha + d}}\right), I = O\left(n^{\frac{d}{2\alpha + d}}\log n\right),\nonumber\\ \tau = \max\{B,V_\cF,\sqrt{d},\omega^2\}, V = V_\cF,
        \end{gather}
        then we have
        \begin{equation}
            \EE\left[\int_\cX \left(\widehat{f}_n(x) - f_0(x)\right)^2 \ud\cP_x(x)\right] \le c\left(V_\cF^2 + \sigma^2\right)\left(n^{-\frac{2\alpha}{2\alpha + d}} + \frac{D}{n}\right)\log^3 n,
        \end{equation}
        where $V_\cF = \norm{f_0}_\infty$ and the expectation is taken over the training sample $S_n$, and $c$ is a constant depending on $\log D$, $\alpha$, $B$, $d$, $\omega$, the surface area of $\cX$ and $c_0$.}
    \end{lemma}
    
\begin{proof}[Proof of Lemma \ref{lem:stat-relu}]
	Recall that the bias and variance decomposition of $\EE\left[\int_{\cX} \left(\hat{f}_n(x) - f_0(x)\right)^2 d\cP_x(x) \right]$ as
	\begin{align*}
		\EE\left[\int_\cX \left(\hat{f}_n(x) - f_0(x)\right)^2 d\cP_x(x) \right] & = \underbrace{\EE \left[\frac{2}{n}\sum_{i=1}^n (\hat{f}_n(x_i) - f_0(x_i))^2 \right]}_{T_1} \\
		& \quad + \underbrace{\EE \left[\int_\cX \left(\hat{f}_n(x) - f_0(x)\right)^2 d\cP_x(x) \right] - \EE \left[\frac{2}{n}\sum_{i=1}^n (\hat{f}_n(x_i) - f_0(x_i))^2 \right]}_{T_2}.
	\end{align*}
	Applying the upper bounds of $T_1$ and $T_2$ in Lemmas \ref{lem:T1} and \ref{lem:T2} respectively, we can derive
	\begin{align*}
		\EE\left[\int_\cX \left(\hat{f}_n(x) - f_0(x)\right)^2 d\cP_x(x) \right] & \leq 4 \inf_{f \in \cF(L, p, I, \tau, V)} \int_{\cX} (f(x) - f_0(x))^2 d\cP_x(x) \\
		& \quad + 48\sigma^2\frac{\log \cN(\delta, \cF(L, p, I, \tau, V), \norm{\cdot}_\infty)+2}{n} \\
		& \quad + 8\sqrt{6} \sqrt{\frac{\log \cN(\delta, \cF(L, p, I, \tau, V), \norm{\cdot}_\infty)+2}{n}} \sigma \delta \\
		& \quad + \frac{104V_\cF^2}{3n} \log \cN(\delta/4V, \cF(L, p, I, \tau, V), \norm{\cdot}_\infty) \\
		& \quad + \left(4 +\frac{1}{2V_\cF} + 8\sigma\right)\delta.
	\end{align*}
	We need there to exist a network in $\cF(L, p, I, \tau, V)$ which can yield a function $f$ satisfying $\norm{f - f_0}_\infty \leq \epsilon$ for $\epsilon \in (0, 1)$. $\epsilon$ will be chosen later to balance the bias-variance tradeoff. By Lemma 2 of \citet{nguyen2021sample}, in order to achieve such $\epsilon$-error, we need 
	\begin{gather*}
            L = O\left(\log \frac{1}{\epsilon}\right), p = O\left(\epsilon^{-\frac{d}{\alpha}}\right), I = O\left(\epsilon^{-\frac{d}{\alpha}}\log \frac{1}{\epsilon}\right),\nonumber\\ \tau = \max\{B,V_\cF,\sqrt{d},\omega^2\}, V = V_\cF,
    \end{gather*}
	where $O(\cdot)$ hides factors of $\log D$, $\alpha$, $d$ and the surface area of $\cX$, so we now have our network architecture as specified in Theorem \ref{thm:approx} in terms of $\epsilon$. Then, we can use the architecture parameters in (\ref{eq:F-param-stat-relu}) to invoke the upper bound of the covering number in Lemma 7 of \citet{chen2021nonparametric}:
	\begin{align*}
	    \log \cN(\delta, \cF(L, p, I, \tau, V), \norm{\cdot}_\infty) &= \log\left(\frac{2L^2(pB+2)\tau^Lp^{L+1}}{\delta}\right)^I\\
	    &\le c''\epsilon^{-\frac{d}{\alpha}}\log^3 \frac{1}{\epsilon}\log\frac{1}{\delta},
	\end{align*}
	where $c''$ is a constant depending on $\log B$, $\omega$ and $\log\log n$.
	
	Plugging it in, we have
	\begin{align}
		\EE\left[\int_\cX \left(\hat{f}_n(x) - f_0(x)\right)^2 d\cD_x(x) \right]  & \leq 4\epsilon^2 + \frac{48\sigma^2}{n} \left(c''\epsilon^{-d/\alpha}\log^3 \frac{1}{\epsilon}\log\frac{1}{\delta} + 2\right) \nonumber \\
		& \quad + 8\sqrt{6c''} \sqrt{\frac{\epsilon^{-d/\alpha}\log^3 \frac{1}{\epsilon}\log\frac{1}{\delta}}{n}} \sigma \delta \nonumber \\
		& \quad + \frac{104V_\cF^2}{3n} \epsilon^{-d/\alpha}\log^3 \frac{1}{\epsilon}\log\frac{1}{\delta} \nonumber \\
		& \quad + \left(4 +\frac{1}{2V_\cF} + 8\sigma\right)\delta \nonumber \\
		& = \tilde{O}\bigg(\epsilon^2 + \frac{V_\cF^2+\sigma^2}{n} \epsilon^{-\frac{d}{\alpha}}\log^3 \frac{1}{\epsilon}\log\frac{1}{\delta} \nonumber \\
		& \qquad \quad + \sigma \delta \sqrt{\frac{\epsilon^{-\frac{d}{\alpha}}\log^3 \frac{1}{\epsilon}\log\frac{1}{\delta}}{n}} + \sigma \delta + \frac{\sigma^2}{n} \bigg). \label{eq:staterrorboundraw-relu}
	\end{align}
	Finally we choose $\epsilon$ to satisfy $\epsilon^2 = \frac{1}{n}\epsilon^{-\frac{d}{\alpha}}$, which gives $\epsilon = n^{-\frac{\alpha}{2\alpha + d}}$. It suffices to pick $\delta = \frac{1}{n}$. Substituting both $\epsilon$ and $\delta$ into (\ref{eq:staterrorboundraw-relu}), we deduce the desired estimation error bound
	\begin{align*}
		\EE\left[\int_\cX \left(\hat{f}_n(x) - f_0(x)\right)^2 d\cD_x(x) \right] &\leq c (V_\cF^2 + \sigma^2)\left(n^{-\frac{2\alpha}{2\alpha + d}} + \frac{D}{n}\right) \log^3 n,
	\end{align*}
	where constant $c$ depends on $\log D$, $d$, $\alpha$, $\frac{2d}{\alpha p-d}$, $p$, $q$, $c_0$, $B$, $\omega$ and the surface area of $\cX$.
\end{proof}


\section{Supplement for Experiments}
\subsection{Experiment Details}\label{sec:appendix-exp}

We use the CartPole environment from OpenAI gym. We consider it as a time-inhomogeneous finite-horizon MDP by setting a time limit of $100$ steps. We turn the terminal states in the original CartPole into absorbing states, so if a trajectory terminates before $100$ steps, the agent would keep receiving zero reward in its terminal state until the end. The target policy is a policy trained for $200$ iterations using REINFORCE, in which each iteration samples for $100$ trajectories with truncation after $150$ time steps. The target policy value $v^\pi$ is estimated to be $65.2117$, which we obtain by Monte Carlo rollout from the initial state distribution. 

For a given behavior policy, to obtain dataset $\cD_h$ at time step $h$, we sample for $K$ independent episodes under the behavior policy and only take the $(s, a, s', r)$ tuple from the $h$-th transition in each episode. This is an excessive way to guarantee the independence among these $K$ samples; in practice, we could directly sample from a sampling distribution. We sample for $\cD_h$ for each $h=1,\cdots,100$.

We use the render() function in OpenAI gym for the visual display of CartPole. We downsample images to the desired resolution via cubic interpolation. A high-resolution image (see Figure \ref{fig:high-res}) is represented as a $3\times 40 \times 150$ RGB array; a low-resolution image (see Figure \ref{fig:low-res}) is represented as a $3\times 20 \times 75$ RGB array. 

\begin{figure}[!htb]
    \minipage{0.47\textwidth}
    \includegraphics[width=\linewidth]{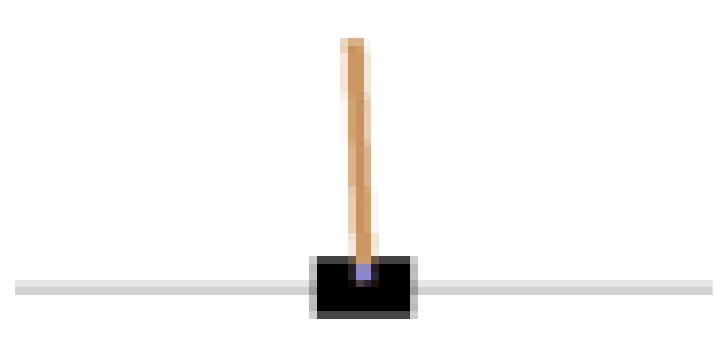}
    \caption{CartPole in high resolution.}\label{fig:high-res}
    \endminipage\hfill
    \minipage{0.47\textwidth}
    \includegraphics[width=\linewidth]{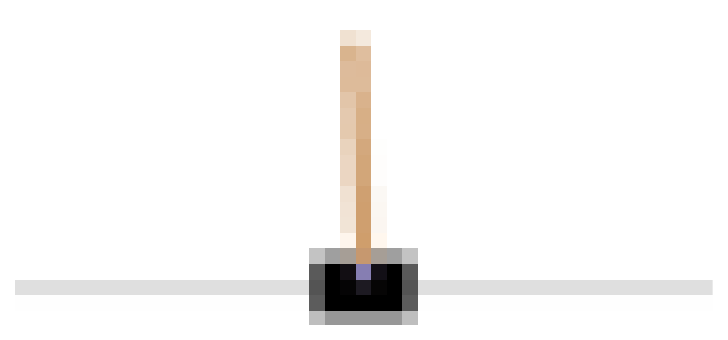}
    \caption{CartPole in low resolution.}\label{fig:low-res}
    \endminipage\hfill
\end{figure}

For the function approximator in FQE, we use a neural network that comprises 3 convolutional layers each with output channel size $16$, $32$ and $32$ and a final linear layer. These layers are interleaved with ReLU activation and batch norm layers for weight normalization. For high resolution input, we use kernel size $5$ and stride $2$; for low resolution input, we use kernel size $3$ and stride $1$. For experiments with high resolution, in each step of FQE, we solve the regression by training the network via stochastic gradient descent with batch size $256$ for $20$ epochs. In high-resolution experiments, we use $0.01$ learning rate; in low-resolution experiments, we use $0.001$ learning rate. We compute the average and standard deviation of FQE's result over $5$ random seeds.

\end{document}